\theoremstyle{plain}
\newtheorem{theorem}{Theorem}[section]
\newtheorem{proposition}[theorem]{Proposition}
\newtheorem{lemma}[theorem]{Lemma}
\theoremstyle{definition}
\newtheorem{assumption}[theorem]{Assumption}
\theoremstyle{remark}
\def\eqref#1{equation~\ref{#1}}
\def\1{\bm{1}}
\def\vtheta{{\bm{\theta}}}
\def\vm{{\bm{m}}}
\def\vx{{\bm{x}}}
\def\mJ{{\bm{J}}}
\def\mM{{\bm{M}}}
\DeclareMathAlphabet{\mathsfit}{\encodingdefault}{\sfdefault}{m}{sl}
\SetMathAlphabet{\mathsfit}{bold}{\encodingdefault}{\sfdefault}{bx}{n}
\def\gL{{\mathcal{L}}}
\def\gM{{\mathcal{M}}}
\def\gS{{\mathcal{S}}}
\icmltitlerunning{Towards Interpretable Deep Local Learning with Successive Gradient Reconciliation}
\begin{document}

\twocolumn[
\icmltitle{Towards Interpretable Deep Local Learning with Successive Gradient Reconciliation}  

\vspace{0mm}
\begin{icmlauthorlist}
	\icmlauthor{Yibo Yang}{yyy}
	\icmlauthor{Xiaojie Li}{xxx,pcl}
	\icmlauthor{Motasem Alfarra}{yyy}
	\icmlauthor{Hasan Hammoud}{yyy}
	\icmlauthor{Adel Bibi}{oxf}
	\icmlauthor{Philip Torr}{oxf}
	\icmlauthor{Bernard Ghanem}{yyy}
\end{icmlauthorlist}
\vspace{-1mm}

\icmlaffiliation{yyy}{King Abdullah University of Science and Technology (KAUST)}
\icmlaffiliation{xxx}{Harbin Institute of Technology (Shenzhen)}
\icmlaffiliation{oxf}{University of Oxford}
\icmlaffiliation{pcl}{Peng Cheng Laboratory}

\icmlcorrespondingauthor{Yibo Yang}{yibo.yang93@gmail.com}

\icmlkeywords{Machine Learning, ICML}

\vskip 0.3in
]



\printAffiliationsAndNotice{}  

\begin{abstract}
	\vspace{-1mm}
Relieving the reliance of neural network training on a global back-propagation (BP) has emerged as a notable research topic due to the biological implausibility and huge memory consumption caused by BP. Among the existing solutions, local learning optimizes gradient-isolated modules of a neural network with local errors and has been proved to be effective even on large-scale datasets. However, the reconciliation among local errors has never been investigated. In this paper, we first theoretically study non-greedy layer-wise training and show that the convergence cannot be assured when the local gradient in a module \emph{w.r.t.} its input is not reconciled with the local gradient in the previous module \emph{w.r.t.} its output. Inspired by the theoretical result, we further propose a local training strategy that successively regularizes the gradient reconciliation between neighboring modules without breaking gradient isolation or introducing any learnable parameters. Our method can be integrated into both local-BP and BP-free settings. In experiments, we achieve significant performance improvements compared to previous methods. 
Particularly, our method for CNN and Transformer architectures on ImageNet is able to attain a competitive performance with global BP, saving more than 40\% memory consumption.
\end{abstract}

\vspace{-7mm}
\section{Introduction}
\label{introduction}


Back-propagation (BP) has been a crucial ingredient for the success of deep learning \cite{lecun2015deep}. Although BP is easy to implement and seems indispensable for training deep neural networks, it has been a concern that BP is distinct from how the brain learns and updates, known as biological implausibility \cite{lillicrap2020backpropagation,bengio2015towards}. First, BP encounters the weight transport problem \cite{grossberg1987competitive}, which means the update of each layer during backward propagation relies on the symmetric weight used for forward propagation \cite{lillicrap2016random,liao2016important}. Second, compared to the brain that updates neurons instantly using local signals, training neural networks with BP suffers from update locking because the gradient descent in a layer can be only performed after the forward and backward propagations of its subsequent layers \cite{jaderberg2017decoupled,frenkel2021learning,dellaferrera2022error,halvagal2023combination}. 
Moreover, due to update locking, all the activations and gradients of a whole model need to be stored, which is the dominant cause of the huge memory consumption for training modern neural networks whose capacity is continually expanding.  

In order to tackle these issues, different approaches are proposed to relieve the reliance on a global BP \cite{nokland2016direct,clark2021credit,silver2021learning,ren2023scaling,journ2023hebbian,belilovsky2019greedy,wang2021revisiting,fournier2023preventing}, among which training with local errors is promising due to its less impaired performance. It divides a neural network into several gradient-isolated modules and trains each locally. The initial explorations adopt greedy layer-wise training for a good initialization before finetuning with BP \cite{hinton2006fast,bengio2006greedy}. Later studies show that greedy layer-wise training can achieve competitive performance on the large scale ImageNet using auxiliary classifiers \cite{belilovsky2019greedy}. Since learning sequentially does not enable to optimize the deep representation simultaneously, recent studies favor the non-greedy fashion where each layer is updated locally with a mini-batch data and then passes the output into the next layer \cite{belilovsky2020decoupled,siddiqui2023blockwise,wang2021revisiting}. However, a performance drop is still inevitable when increasing the number of local modules, and current studies have not rigorously pinpointed the inherent limitation of local training compared to global BP.



In this paper, we investigate the defect of \emph{non-greedy} layer-wise training from a view of the reconciliation among local errors and propose a remedy for it. Consider a local layer $x_{k}= f(x_{k-1}, \theta_k)$, where $x_{k-1}$ is the input of layer $k$ and also the output of layer $k-1$, and $\theta_k$ is the learnable parameters of this layer. Local training optimizes $\theta_k$ independently with a classifier head $l_k=h(x_k,w_k)$ that is parameterized by $w_k$ to produce the local error signal $l_k$ using a loss function, \emph{e.g.} cross entropy loss. Note that the local layer $f$ is a function with two variables $x_{k-1}$ and $\theta_k$. 
If $x_k$ is the last-layer representation and we use global BP to train the network, 
the gradient \emph{w.r.t.} the first variable, $\frac{\partial l_k}{\partial x_{k-1}}$, can be back-propagated to prior layers whose update can lead to a new $x_{k-1}$ that reduces $l_k$. In layer-wise training, the optimization regarding $x_{k-1}$ is achieved by updating $\theta_{k-1}$ with the previous local error $l_{k-1}$. However, there is no assurance that the update of $\theta_{k-1}$ in the previous layer can cause a change of $\Delta x_{k-1}$ that most satisfies the demand of the current layer to reduce $l_k$. A previous study \cite{jaderberg2017decoupled} deals with the correctness of local gradients by learning local gradient generators. But it relies on the global true gradient so breaks gradient isolation. We point out that the discordant local update, which global BP is naturally immune to, is the fundamental defect in the context of non-greedy layer-wise training. 

To this end, we first theoretically analyze the convergence of a two-layer model. The two layers are updated with their respective local errors in a non-greedy manner. Our result indicates that when the gradient of the second layer \emph{w.r.t.} its input, \emph{i.e.,} $\frac{\partial l_2}{\partial x_1}$, has a large distance to the gradient of the first layer \emph{w.r.t.} its output, \emph{i.e.,} $\frac{\partial l_1}{\partial x_1}$, the convergence of the second layer cannot be guaranteed. Inspired by the result, we propose a simple yet interpretable and effective method, named successive gradient reconciliation.
When training the $(k-1)$-th layer locally, we store the gradient \emph{w.r.t.} the output $\frac{\partial l_{k-1}}{\partial x_{k-1}}$, and make the output require gradient being the input of the next layer. At the update of the $k$-th layer, we calculate the gradient \emph{w.r.t.} the input $\frac{\partial l_{k}}{\partial x_{k-1}}$, and add a regularizer on the local error to minimize the distance between the two gradients. By doing so, the optimization of each layer is towards a direction such that the parameters $\theta_k$ 
not only decrease the local error $l_{k}$, but also enable the change of input $\Delta x_{k-1}$ coming from the previous layer to decrease $l_{k}$. The process is performed layer by layer to successively reconcile local objectives and transmit them into the last layer for a better final representation without breaking gradient isolation. 

Our method effectively improves the performance of local training in both local-BP and BP-free cases. In local-BP training, we successfully enable memory-efficient training by removing a global BP. In BP-free training, we adopt a fixed local classifier head borrowing the ideas from \cite{frenkel2021learning,yang2022inducing}, and achieve significantly better performance than existing BP-free methods.
The contributions of this study can be listed as follows:
\begin{itemize}
	\item We derive a theoretical convergence bound in the context of non-greedy layer-wise training, and show that the reconciliation among local errors is crucial to ensure the convergence of the last-layer error.
	\item We propose successive gradient reconciliation, which successively reconciles the local updates of every two neighboring layers without breaking gradient isolation or introducing any learnable parameters.  
	We integrate our method in both local-BP and BP-free settings. 
	\item We conduct extensive experiments on CIFAR-10, CIFAR-100, and ImageNet to verify the effectiveness of our method, and show that our method surpasses previous methods and is able to attain a competitive performance with global BP saving over 40\% memory consumption for CNN and Transformer architectures. 
\end{itemize}

\section{Related Work}


Back-propagation has been important for deep network training due to its effective credit assignment into hierarchical representations \cite{rumelhart1986learning}. 
Despite the success, the weight transport and the update locking problems constrain BP from being biologically plausible and memory efficient \cite{lillicrap2020backpropagation,crick1989recent,grossberg1987competitive}, which has attracted wide research interests from both neuroscience and machine learning communities to propose alternatives to a global BP \cite{bengio2015towards,hinton2022forward,halvagal2023combination,song2024inferring}. 

\textbf{Gradient estimation.} Introducing auxiliary optimization variables for each layer can spare the need to back propagation a global error \cite{taylor2016training,li2020training}. Gradient estimation can be also achieved by forward automatic differentiation \cite{baydin2022gradients,silver2021learning,ren2023scaling} and forward propagation again with a disturbed input \cite{hinton2022forward,dellaferrera2022error}. However, these methods have not been scalable on large datasets and are not competitive with BP. 

\textbf{Credit assignment.} In order to relax the weight transport problem, different credit assignment strategies are proposed with only sign symmetry \cite{liao2016important,xiao2018biologicallyplausible} or random feedback connection, known as feedback alignment (FA) \cite{lillicrap2016random}. Later studies improve over FA by adjusting the random connection \cite{akrout2019deep} and directly propagating the global error into each layer \cite{nokland2016direct,clark2021credit}. 
However, most of these methods cannot achieve satisfactory performance on large datasets \cite{bartunov2018assessing} and local updates still cannot be performed without waiting. DRTP \cite{frenkel2021learning} refrains from update locking by assigning local errors directly from labels instead of the global error, but further introduces performance deterioration. 

\textbf{Local learning.} Another promising solution, which our method can be categorized as, is to look for a local update rule such that layers are optimized independently to satisfy both weight-transport free and update unlocking. Hebbian plastic rule based on synaptic plasticity has been leveraged for local update in an unsupervised or self-supervised manner \cite{illing2021local,journ2023hebbian,halvagal2023combination}. Greedy layer-wise training is initially proved to be effective in producing a good initialization \cite{yang2022towards} for the subsequent finetuning with BP \cite{hinton2006fast,bengio2006greedy}. Later studies adopt auxiliary heads to train layers locally for supervised learning \cite{mostafa2018deep,nokland2019training,belilovsky2019greedy,belilovsky2020decoupled,you2020l2} and self-supervised learning \cite{lowe2019putting,xiong2020loco,siddiqui2023blockwise}. Although there is still back-propagation within the auxiliary head composed of multiple layers, it helps to attain a comparable performance with global BP on ImageNet \cite{belilovsky2019greedy}. The recent well-performing methods favor non-greedy layer-wise training \cite{wang2021revisiting,siddiqui2023blockwise}. However, the fundamental defect of layer-wise training over global BP has not been unveiled to be clear. 
In \cite{jaderberg2017decoupled}, a gradient generator is proposed to correct local errors with the global BP signal, but it breaks the gradient isolation among local modules. 
In \cite{wang2021revisiting}, a reconstruction loss is added to each local error, but the decoder increases the length of local BP and incurs more local parameters and memory cost. 
Different from these studies, our method reconciles the local errors successively without breaking gradient isolation or introducing learnable parameters, and can be applied in BP-free training. 
A previous work studies the convergence of layer-wise training \cite{shin2022effects}, but the analysis is conducted in a linear model with global BP instead of local learning. Memory reduction can be also achieved by gradient checkpoint or reversible architecture \cite{chen2016training,gomez2017reversible}, but these strategies rely on the tradeoff between computation and memory. As a comparison, local learning can detach each layer so naturally brings memory efficiency. 




\section{Method}

In Sec. \ref{theoretical_result}, we analyze local learning and derive a convergence bound that shows its defect. Based on the result, we propose a method to remedy in Sec. \ref{method}, and show how to apply our method in local-BP training and BP-free training in Sec. \ref{training}. Finally, in Sec. \ref{justification}, we use a simplified two-layer model to empirically showcase how our method facilitates the loss reduction of the output layer. 

\subsection{Theoretical Result}
\label{theoretical_result}

We consider a two-layer model composed of $\vx_1 = f(\vx_0,\vtheta_1)$ and $\vx_2=f(\vx_1, \vtheta_2)$, where $\vtheta_1$ and $\vtheta_2$ are the parameters of the two layers, respectively, $\vx_0$ is the input data, $\vx_1$ is the output of the first layer and also the input of the second layer, $\vx_2$ is the output of the second layer, and $f$ can be a composite function including linear and non-linear transformations. In layer-wise training, we have two local heads that produce local errors $\gL_1(\vx_1)$ and $\gL_2(\vx_2)$ to update the first layer and the second layer, respectively. Different from a global BP, the gradient of $\gL_2$ does not back propagate into the first layer. Accordingly, the training in the $i$-th iteration can be formulated as:
\begin{align}
	&\vtheta_1^{(i+1)} \leftarrow \vtheta_1^{(i)} -\eta_1^{(i)} \nabla_{\vtheta_1}\gL_1(\vtheta_1^{(i)}), \label{update1}\\
	&\vtheta_2^{(i+1)} \leftarrow \vtheta_2^{(i)} -\eta_2^{(i)} \nabla_{\vtheta_2}\gL_2(\vtheta_1^{(i+1)}, \vtheta_2^{(i)}), \label{update2} 
\end{align}
where $\eta_1$, $\eta_2$ are the learning rates of the two layers. 
We analyze the convergence of the above local learning based on the PL condition \cite{karimi2016linear} and gradient Lipschitz. The assumptions and results are stated as following.

\begin{assumption}[PL Condition]
	\label{assump1}
	Let $\gL_2^*$ be the optimal function value of the second layer loss $\gL_2$. There exists a $\mu$ such that $\forall \vtheta_1, \vtheta_2$, we have
	\begin{align}
		\left\| \nabla \gL_2(\vtheta_1, \vtheta_2) \right\|^2&=\left\| \nabla_{\vtheta_1} \gL_2(\vtheta_1, \vtheta_2) \right\|^2+\left\| \nabla_{\vtheta_2} \gL_2(\vtheta_1, \vtheta_2) \right\|^2 \notag \\
		&\ge \mu\left(  \gL_2(\vtheta_1, \vtheta_2) - \gL_2^* \right), \notag
	\end{align}
	where $\nabla\gL_2=[\nabla_{\vtheta_1}\gL_2; \nabla_{\vtheta_2}\gL_2]$.
\end{assumption}

\begin{assumption}[Layer-wise Lipschitz and global Lipschitz]
	\label{assump2}
	There exists $L_1$, $L_2$, and $L > 0$ such that for all $\vtheta_{1,a}, \vtheta_{1,b}, \vtheta_1, \vtheta_{2,a}, \vtheta_{2,b}, \vtheta_2$, we have
	\begin{align}
	&\left\| \nabla_{\vtheta_1} \gL_2(\vtheta_{1,a}, \vtheta_2) - \nabla_{\vtheta_1} \gL_2(\vtheta_{1,b}, \vtheta_2) \right\| \le L_1 \left\| \vtheta_{1,a} - \vtheta_{1,b} \right\|,  \notag\\
	&\left\| \nabla_{\vtheta_2} \gL_2(\vtheta_{1}, \vtheta_{2,a}) - \nabla_{\vtheta_2} \gL_2(\vtheta_{1}, \vtheta_{2,b}) \right\| \le L_2 \left\| \vtheta_{2,a} - \vtheta_{2,b} \right\| \notag,
	\end{align}	
	and
	\vspace{-1mm}
	\begin{align}
	\left\| \nabla \gL_2(\vtheta_{1,a}, \vtheta_{2,a}) - \nabla \gL_2(\vtheta_{1,b}, \vtheta_{2,b}) \right\| \le L \left\|
	\begin{bmatrix}
		\vtheta_{1,a} - \vtheta_{1,b} \\
		\vtheta_{2,a} - \vtheta_{2,b}
	\end{bmatrix} 
	\right\|. \notag
	\end{align}
\end{assumption}

\begin{theorem}
	\label{theorem}
	Based on Assumptions \ref{assump1} and \ref{assump2}, if the learning rates are set as $\eta_1^{(i)}=\eta_1$ and $\eta_2^{(i)}=\eta_2$, where
	\begin{equation}
	\left\{\hspace{-4mm}
	\begin{array}{cl}
	& 0<\eta_1\le \min \left( \frac{\sqrt{L_1^2+8L^2}-L_1}{4L^2}, \frac{2}{\mu}, \frac{1}{2L_2} \right)\\
	& \max \left(0, \frac{1-\sqrt{1-2L_2\eta_1}}{L_2} \right) < \eta_2 \le \frac{1+\sqrt{1-2L_2\eta_1}}{L_2} \notag
	\end{array}
	\right.
	,
	\end{equation}
	we have the following convergence
	\begin{equation}
		\label{convergence1}
		\gL_2^{(i+1,i+1)}-\gL_2^* \le \left(1-\alpha\mu\right)\left(\gL_2^{(i,i)}-\gL_2^*\right)
		+ \alpha \left\| \bm{\epsilon}^{(i)} \right\|^2,
	\end{equation}
	and recursively applying Eq. (\ref{convergence1}) we have
	\begin{align}
		\label{convergence2}
		\gL_2^{(i+1,i+1)}-\gL_2^* \le& \left(1-\alpha\mu\right)^{i+1}\left(\gL_2^{(0,0)}-\gL_2^*\right) \notag \\
		&+ \alpha \sum_{k=0}^{i}(1-\alpha\mu)^k\left\| \bm{\epsilon}^{(i-k)} \right\|^2,
 	\end{align}
 	where $\gL_2^{(i,i)}$ denotes $\gL_2(\vtheta_1^{(i)}, \vtheta_2^{(i)})$, \emph{i.e.,} the second layer loss value with parameters $\vtheta_1^{(i)}$ and $\vtheta_2^{(i)}$ in the $i$-th iteration, $\alpha=\frac{\eta_1}{2}$, and $\bm{\epsilon}^{(i)} \triangleq \nabla_{\vtheta_1}\gL_2^{(i,i)} - \nabla_{\vtheta_1}\gL_1^{(i)}$.  
\end{theorem}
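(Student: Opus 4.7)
My plan is to split each iteration into a $\vtheta_1$-step (call it Step A) and a subsequent $\vtheta_2$-step (Step B), apply a layer-wise smoothness descent lemma to each, combine the two bounds, and conclude via the PL condition. Throughout, let $\vg_1=\nabla_{\vtheta_1}\gL_1^{(i)}$, $\vg_2=\nabla_{\vtheta_1}\gL_2^{(i,i)}$, $\vh=\nabla_{\vtheta_2}\gL_2^{(i,i)}$, and $E=\eta_2(1-L_2\eta_2/2)$; by definition of $\bm{\epsilon}^{(i)}$ one has $\vg_1=\vg_2-\bm{\epsilon}^{(i)}$, so $\bm{\epsilon}^{(i)}$ is exactly the mismatch between the actual Step-A direction and the ``ideal'' direction $\vg_2$ that global BP would have used.

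\textbf{Two descent lemmas.} For Step A I would use the partial $L_1$-smoothness in Assumption \ref{assump2} to write
\begin{equation}
\gL_2(\vtheta_1^{(i+1)},\vtheta_2^{(i)})-\gL_2^{(i,i)}\le -\eta_1\langle\vg_2,\vg_1\rangle+\tfrac{L_1\eta_1^2}{2}\|\vg_1\|^2. \notag
\end{equation}
For Step B the $L_2$-smoothness gives $\gL_2^{(i+1,i+1)}-\gL_2(\vtheta_1^{(i+1)},\vtheta_2^{(i)})\le -E\|\nabla_{\vtheta_2}\gL_2(\vtheta_1^{(i+1)},\vtheta_2^{(i)})\|^2$. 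To replace the gradient at the shifted $\vtheta_1^{(i+1)}$ by $\vh$, I invoke the global $L$-smoothness in Assumption \ref{assump2} to bound $\|\nabla_{\vtheta_2}\gL_2(\vtheta_1^{(i+1)},\vtheta_2^{(i)})-\vh\|\le L\eta_1\|\vg_1\|$, then apply a Young-type inequality $\|\vh+\bm{\delta}\|^2\ge(1-\gamma)\|\vh\|^2-(1/\gamma-1)\|\bm{\delta}\|^2$ with $\gamma$ tuned so that $E(1-\gamma)=\alpha=\eta_1/2$. This tuning is solvable in $\gamma\in(0,1)$ iff $E\ge\eta_1$, which, upon squaring out, is exactly the stated window for $\eta_2$; the subsidiary bounds $\eta_1\le 1/(2L_2)$ and $\eta_1\le 2/\mu$ only keep $\sqrt{1-2L_2\eta_1}$ real and force $1-\alpha\mu\ge 0$.

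\textbf{Algebraic cancellation of the $\|\vg_1\|^2$ term.} Adding the two sub-step bounds yields $\gL_2^{(i+1,i+1)}-\gL_2^{(i,i)}\le -\eta_1\langle\vg_2,\vg_1\rangle+A\|\vg_1\|^2-\alpha\|\vh\|^2$ with $A=\tfrac{L_1\eta_1^2}{2}+\tfrac{E L^2\eta_1^3}{2E-\eta_1}$. Rather than the loose bound $\|\vg_1\|^2\le 2\|\vg_2\|^2+2\|\bm{\epsilon}^{(i)}\|^2$, I would substitute the \emph{exact} identity $\|\vg_1\|^2=\|\vg_2\|^2-2\langle\vg_2,\bm{\epsilon}^{(i)}\rangle+\|\bm{\epsilon}^{(i)}\|^2$ and collect the cross terms into $(\eta_1-2A)\langle\vg_2,\bm{\epsilon}^{(i)}\rangle$. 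Applying Young's inequality with parameter $1$ to this residue makes the $A$-terms cancel, leaving
\begin{equation}
\gL_2^{(i+1,i+1)}-\gL_2^{(i,i)}\le -\alpha\bigl(\|\vg_2\|^2+\|\vh\|^2\bigr)+\alpha\|\bm{\epsilon}^{(i)}\|^2, \notag
\end{equation}
provided $\eta_1\ge 2A$. A short estimate using $E\ge\eta_1$ gives $2A\le L_1\eta_1^2+2L^2\eta_1^3$ uniformly in the admissible $E$, so the required $\eta_1\ge 2A$ reduces to $L_1\eta_1+2L^2\eta_1^2\le 1$, which is exactly the content of the first bound $\eta_1\le(\sqrt{L_1^2+8L^2}-L_1)/(4L^2)$.

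\textbf{Closing and unrolling.} Invoking Assumption \ref{assump1}, the sum $\|\vg_2\|^2+\|\vh\|^2=\|\nabla\gL_2^{(i,i)}\|^2\ge\mu(\gL_2^{(i,i)}-\gL_2^*)$ converts the preceding display directly into Eq. (\ref{convergence1}); Eq. (\ref{convergence2}) then follows by a one-line induction that accumulates the $\alpha\|\bm{\epsilon}^{(\cdot)}\|^2$ residuals into the geometric sum $\sum_{k=0}^{i}(1-\alpha\mu)^{k}$. I expect the principal obstacle to be the Step-B-to-$\vh$ conversion together with the subsequent identity-based cancellation: the two Young parameters must be tuned so that, simultaneously, the coefficient of $\|\vh\|^2$ equals $-\alpha$, the coefficient of $\|\vg_2\|^2$ equals $-\alpha$ after cancellation, and the coefficient of $\|\bm{\epsilon}^{(i)}\|^2$ stays at exactly $+\alpha$. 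Any looseness inflates either the contraction factor $1-\alpha\mu$ or the error-accumulation term, so threading all three coefficients through a single choice of $(\eta_1,\eta_2)$ is the crux of the argument and is precisely what the quadratic learning-rate conditions in the theorem encode.
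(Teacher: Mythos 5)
Your proof is correct and lands on exactly the paper's constants, but the middle of the argument is organized differently from the paper's. Both proofs open identically: the layer-wise descent lemma applied to the $\vtheta_1$-step and then to the $\vtheta_2$-step, and both close identically with the PL condition and the same geometric unrolling. The divergence is in how the shifted gradient $\nabla_{\vtheta_2}\gL_2(\vtheta_1^{(i+1)},\vtheta_2^{(i)})$ and the cross term are reconciled with $\|\nabla\gL_2^{(i,i)}\|^2-\|\bm{\epsilon}^{(i)}\|^2$. The paper isolates a second lemma that \emph{upper}-bounds $\|\nabla\gL_2^{(i,i)}\|^2-\|\bm{\epsilon}^{(i)}\|^2$ by $(2(L\eta_1)^2+1)\|\nabla_1\gL_1^{(i)}\|^2+2\|\nabla_2\gL_2^{(i+1,i)}\|^2+2\nabla_1^T\gL_1^{(i)}\bm{\epsilon}^{(i)}$ (using the global Lipschitz constant and a fixed factor-of-two Young step), and then compares coefficients componentwise against the descent bound; crucially it never bounds the cross term $\nabla_1^T\gL_1^{(i)}\bm{\epsilon}^{(i)}$ at all — setting $\eta_1=2\alpha$ makes its coefficients match on both sides. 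You instead push the shifted gradient down to $\|\vh\|^2$ with a tunable Young parameter $\gamma$, expand $\|\vg_1\|^2$ exactly in $\vg_2$ and $\bm{\epsilon}^{(i)}$, and apply Young with parameter $1$ to the residual $\langle\vg_2,\bm{\epsilon}^{(i)}\rangle$ so that the $A$-terms cancel. Your route avoids a standalone lemma and makes it transparent why the $\|\bm{\epsilon}^{(i)}\|^2$ coefficient ends up being exactly $\alpha$; the paper's route avoids ever touching the cross term with an inequality. One small misattribution: the tuning $E(1-\gamma)=\alpha$ is solvable for $\gamma\in(0,1)$ whenever $E>\eta_1/2$, not ``iff $E\ge\eta_1$''; the stronger condition $E\ge\eta_1$ (which is what the $\eta_2$-window actually encodes) is needed instead for your estimate $2E/(2E-\eta_1)\le 2$ that yields $2A\le L_1\eta_1^2+2L^2\eta_1^3$ and hence recovers the first learning-rate bound. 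Since you do invoke $E\ge\eta_1$ where it is genuinely required, this does not affect correctness.
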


\begin{figure*}[t]
	\centering
	\includegraphics[width=0.88\linewidth]{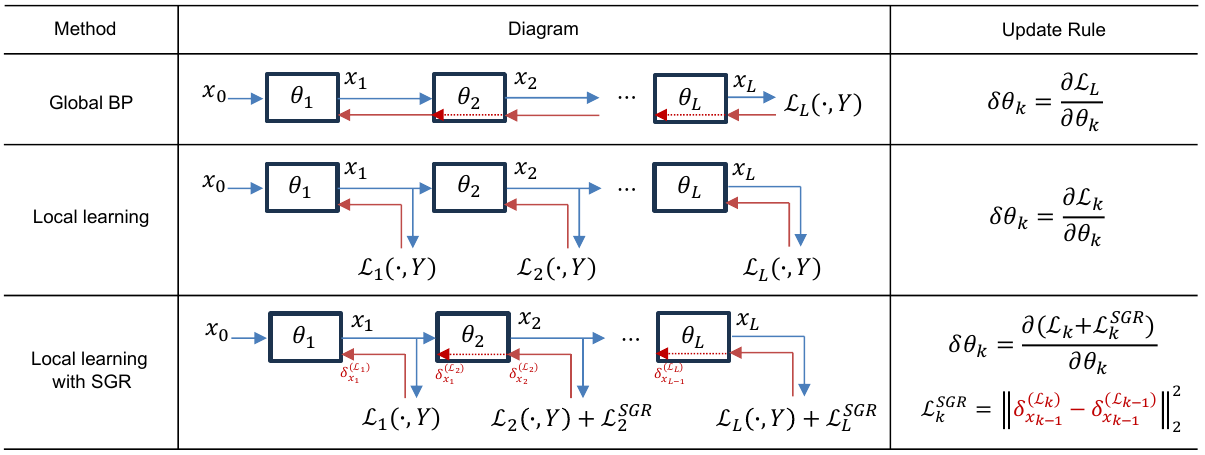}
	\vspace{-2mm}
	\caption{An illustration to compare our method with non-greedy local learning and global BP. The blue arrows indicate forward propagation, while the red solid arrows and red dashed arrows denote the backward gradient \emph{w.r.t.} the output feature and the input feature of each block, respectively. In global BP, gradients are passed into prior blocks to update the parameters, but the updates in local learning  may be deviated by local errors. Our method successively reconciles local updates in a forward mode without breaking gradient isolation.}
	\label{fig1}
	\vspace{-1mm}
\end{figure*}

We care more about $\gL_2$ because it is the loss of the last layer that is used to produce the representation for inference. 
Note that in local learning, the update of $\vtheta_{1}$ in Eq. (\ref{update1}) is only dependent on $\gL_1$ due to gradient isolation. 
The result in Eq. (\ref{convergence1}) indicates that when the local gradient $\nabla_{\vtheta_1}\gL_1^{(i)}$ has a large distance with $\nabla_{\vtheta_1}\gL_2^{(i,i)}$, \emph{i.e.,} $\left\| \bm{\epsilon}^{(i)} \right\|$ is large, the optimization of the $i$-th iteration does not ensure a loss reduction of $\gL_2$. Moreover, $\bm{\epsilon}$ has an accumulative effect during training, as shown in Eq. (\ref{convergence2}). As a result, only when $\left\| \bm{\epsilon} \right\|$ in every iteration keeps small, can we ensure that $\gL_2$ is converged towards its optimality. 

\subsection{Successive Gradient Reconciliation}
\label{method}

Even though learning by local update rules has been widely adopted in prior studies \cite{ren2023scaling,siddiqui2023blockwise,frenkel2021learning,belilovsky2020decoupled}, our theoretical result in Theorem \ref{theorem} implies that a defect of local learning, which may impede its convergence, lies in the reconciliation between local gradient and the global one. 
As a comparison, when training with global BP, the local gradient in Eq. (\ref{update1}), $\nabla_{\vtheta_1}\gL_1^{(i)}$, is replaced by the global gradient, $\nabla_{\vtheta_1}\gL_2^{(i,i)}$, via back propagation, which means $\left\| \bm{\epsilon} \right\|=0$. Therefore, global BP is inherently spared from discordant updates. In \cite{jaderberg2017decoupled}, a generator is learned to produce a synthetic local gradient close to the global one. But it relies on the global BP to provide the true gradient, so breaks gradient isolation.

We propose successive gradient reconciliation (SGR) for local learning. It is able to reconcile local updates without relying on the true global gradient or breaking gradient isolation. For any two neighboring layers, we have
\begin{align}
	\left\| \nabla_{\vtheta_{k}}\gL_{k+1} - \nabla_{\vtheta_{k}}\gL_{k} \right\| 
	& \le \left\| \mJ_f(\vtheta_k)^T \right\| \left\| \delta_{\vx_{k}}^{(\gL_{k+1})} - \delta_{\vx_{k}}^{(\gL_{k})} \right\|,  \notag 
\end{align}
where $\mJ_f(\vtheta_k)$ is the Jacobian matrix of the output feature $\vx_k$ \emph{w.r.t.} the parameters $\vtheta_k$ in this layer, and $\delta_{\vx_{k}}^{(\gL_{k+1})}$ is the abbreviation for $\frac{\partial \gL_{k+1}}{\partial \vx_{k}}$, \emph{i.e.,} the gradient \emph{w.r.t.} $\vx_{k}$ from the next layer's loss $\gL_{k+1}$. Instead of directly minimizing the left side of the above equation, which needs to perform back propagation through two layers, we minimize the gradient distance from two neighboring local errors. 
After the update of the $(k-1)$-th layer, we store the gradient $\delta_{\vx_{k-1}}^{(\gL_{k-1})}$, and make the output feature $\vx_{k-1}$ \emph{detached but require gradient} being the input of the $k$-th layer. At the update of $k$-th layer ($k\ge2$), we optimize with the following objective:
\begin{align}
	&\min_{\vtheta_{k}}\quad \gL_{k} + \lambda \cdot \gL_{k}^{SGR}, \label{objective}\\
	&\gL_{k}^{SGR} =  \left\| \delta_{\vx_{k-1}}^{(\gL_{k})} - \delta_{\vx_{k-1}}^{(\gL_{k-1})} \right\|_2^2 , \label{sgrloss}
\end{align}
where $\gL_{k}$ is the local error for the $k$-th layer, \emph{e.g.} the cross entropy loss, $\lambda$ is a hyper-parameter, and $\gL_{k}^{SGR} $ is an MSE between the two gradient terms. 
As shown in Figure \ref{fig1}, $\gL_{k}^{SGR} $ is added on each layer (except the first layer) to successively reconcile neighboring local updates. The implementation is outlined in Algorithm \ref{algorithm} in the Appendix.

For each layer, it is a function with two dependent variables, the input $\vx_{k-1}$ and the parameter $\vtheta_{k}$. In local learning, the gradient $\delta_{\vx_{k-1}}^{(\gL_{k})}$ cannot be passed into prior layers, and its change is decided by prior local errors, which is the cause of the discordant local updates. Our method can be interpreted as optimizing $\vtheta_{k}$ such that it not only reduces the local error $\gL_{k}$, but also enables the change of input caused by the previous layer, \emph{i.e.,} $\delta_{\vx_{k-1}}^{(\gL_{k-1})}$, to reduce $\gL_{k}$. By doing so, the local objectives are successively delivered into the last layer for a better output representation without breaking gradient isolation. Although the reconciliation concerned by our Theorem \ref{theorem} is between the local gradient and the global true gradient, while our method only deals with neighboring local gradients, the following proposition indicates that both $\left\| \bm{\epsilon} \right\|\rightarrow0$ for all local layers and $\gL_{k}^{SGR}\rightarrow0, \forall  k\ge2$, lead to an equivalence to global BP. 
\begin{proposition}
	\label{proposition1}
	For a model composed of $L$ local layers parameterized by $\vtheta_k$ with their local errors $\gL_{k}$, $k=1,...,L$, when $\gL_k^{SGR}=0$ for all layers $2\le k \le L$ for a batch of data, we have
	\begin{equation}
		\nabla_{\vtheta_k}\gL_L = \nabla_{\vtheta_k}\gL_k,\quad \forall 1\le k \le L-1, \notag
	\end{equation}
	which implies that all local updates are equivalent to learning with the global true gradient back propagated from the last-layer error $\gL_L$. 
\end{proposition}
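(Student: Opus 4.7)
The plan is to prove the proposition by backward induction on the layer index $k$, using only the chain rule and the pointwise equalities supplied by the SGR condition. The key observation is that $\gL_k^{SGR}=0$ is equivalent to the identity $\partial \gL_k / \partial \vx_{k-1} = \partial \gL_{k-1} / \partial \vx_{k-1}$ for every $2\le k\le L$, and that each parameter-gradient factors through the corresponding input-gradient via $\nabla_{\vtheta_k}\gL = (\partial \gL/\partial \vx_k)\cdot(\partial \vx_k/\partial \vtheta_k)$ whenever $\gL$ depends on $\vtheta_k$ only through $\vx_k$.

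First I would establish the auxiliary claim that under the SGR assumption,
\begin{equation}
\frac{\partial \gL_L}{\partial \vx_k} = \frac{\partial \gL_k}{\partial \vx_k}, \qquad 1\le k \le L-1. \notag
\end{equation}
This is proved by a downward induction on $k$. The base case $k=L-1$ follows directly: applying the chain rule along the single edge $\vx_{L-1}\mapsto \vx_L$ gives $\partial \gL_L/\partial \vx_{L-1} = (\partial \gL_L/\partial \vx_L)(\partial \vx_L/\partial \vx_{L-1})$, which is exactly $\partial \gL_L/\partial \vx_{L-1}$ evaluated as a local quantity in block $L$; then the hypothesis $\gL_L^{SGR}=0$ identifies it with $\partial \gL_{L-1}/\partial \vx_{L-1}$. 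For the inductive step, assume $\partial \gL_L / \partial \vx_{k+1} = \partial \gL_{k+1}/\partial \vx_{k+1}$. The chain rule along $\vx_k\mapsto \vx_{k+1}$ yields
\begin{equation}
\frac{\partial \gL_L}{\partial \vx_k} = \frac{\partial \gL_L}{\partial \vx_{k+1}}\cdot\frac{\partial \vx_{k+1}}{\partial \vx_k} = \frac{\partial \gL_{k+1}}{\partial \vx_{k+1}}\cdot\frac{\partial \vx_{k+1}}{\partial \vx_k} = \frac{\partial \gL_{k+1}}{\partial \vx_k}, \notag
\end{equation}
and the SGR equality $\gL_{k+1}^{SGR}=0$ then gives $\partial \gL_{k+1}/\partial \vx_k = \partial \gL_k/\partial \vx_k$, completing the induction.

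With the auxiliary claim in hand, the proposition is immediate: since $\vx_k = f(\vx_{k-1},\vtheta_k)$ depends on $\vtheta_k$ only through the single block, the chain rule gives
\begin{equation}
\nabla_{\vtheta_k}\gL_L = \frac{\partial \gL_L}{\partial \vx_k}\cdot\frac{\partial \vx_k}{\partial \vtheta_k} = \frac{\partial \gL_k}{\partial \vx_k}\cdot\frac{\partial \vx_k}{\partial \vtheta_k} = \nabla_{\vtheta_k}\gL_k, \notag
\end{equation}
for every $1\le k\le L-1$, which is the desired identity.

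There is no real obstacle here beyond bookkeeping; the only subtlety worth flagging explicitly is that the SGR equality must hold \emph{pointwise} at the current parameter values and on the current batch (not merely in expectation), so that the inductive step can chain exact equalities rather than approximate ones, and that the partial derivatives $\partial \gL_k/\partial \vx_k$ and $\partial \vx_{k+1}/\partial \vx_k$ exist (guaranteed by the differentiability of $f$ and the local heads already assumed in Section \ref{method}). The conclusion that local updates become equivalent to global BP then follows simply by recognizing the right-hand side $\nabla_{\vtheta_k}\gL_k$ as the update rule actually used in local learning, while the left-hand side $\nabla_{\vtheta_k}\gL_L$ is exactly what global BP from $\gL_L$ would deliver to block $k$.
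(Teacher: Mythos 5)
Your proof is correct and follows essentially the same route as the paper's: a backward recursion from the last layer that uses each $\gL_k^{SGR}=0$ to propagate the identity $\partial\gL_L/\partial\vx_k=\partial\gL_k/\partial\vx_k$ down the network, followed by multiplication with the Jacobian $\partial\vx_k/\partial\vtheta_k$ to convert input-gradients into parameter-gradients. Your explicit framing as a downward induction with a stated auxiliary claim, and the remark that the equalities must hold pointwise on the batch, are just slightly more formal packaging of the paper's argument.
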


\subsection{Local-BP Training and BP-free Training}
\label{training}

\begin{table}[t]
	\begin{center}
		\caption{Attribute comparison between our method (BP-free version) and the prior studies that also focus on the evils of BP. }
		\resizebox{\linewidth}{!}{
			\begin{tabular}{l|ccc}
				\toprule
				Method & transport free & update unlocking & BP-free \\
				\midrule 
				Globap BP & \ding{55} & \ding{55} & \ding{55} \\
				FA \cite{lillicrap2016random} & \checkmark & \ding{55} & \ding{55} \\
				DFA \cite{nokland2016direct} & \checkmark & partially & \checkmark \\
				PEPITA	{\tiny \cite{dellaferrera2022error}} & \checkmark & partially & \checkmark \\
				DRTP \cite{frenkel2021learning} & \checkmark & \checkmark & \checkmark \\
				Ours (BP-free) &  \checkmark & \checkmark & \checkmark \\
				\bottomrule
			\end{tabular}
		}
		\label{table1}
	\end{center}
\end{table}

We apply our method in both local-BP and BP-free setups. In local-BP training, a model is divided into multiple layers and each layer can be a stack of multiple blocks. Besides, the local classifier head can also have more than one layer, which is proved to be effective in improving the performance of layer-wise training \cite{belilovsky2019greedy}. Therefore, there is still back propagation within the classifier head and the backbone. We adopt the auxiliary classifier head design following \cite{belilovsky2019greedy,wang2021revisiting}, and add our $\gL_k^{(SGR)}$ on all the layers $k\ge2$. {The local error $\gL_k$ can be both the loss functions for supervised learning and self-supervised learning.}

In BP-free training, the gradient to each layer's output, $\delta\vx_{k}$, is based on an analytical update rule, and each layer is only a basic block composed of a linear transformation and a non-linear activation. Therefore, there is no back propagation through multiple layers. A comparison between several BP-free methods for supervised learning is shown in Table \ref{table1}. FA \cite{lillicrap2016random} uses a random matrix to replace the transport of weight matrix but still relies on BP. DFA \cite{nokland2016direct} directly projects the last-layer gradient into each layer via random matrices, so spares the need of BP. But local updates cannot be performed until a full forward propagation is finished, which means that update unlocking is only partially solved. PEPITA \cite{dellaferrera2022error} improves over forward-forward learning \cite{hinton2022forward} and needs a second forward propagation. As a result, update unlocking is also partially solved. 

DRTP directly acquires $\delta\vx_{k}$ by projecting labels with a random matrix. Our method in the BP-free setup also adopts a fixed matrix but with a particular structure named equiangular tight frame (ETF), which has the maximal equiangular separation
 \cite{papyan2020prevalence}, and can be formulated as,
 \begin{equation}
 	\label{etf_classifier}
 	\vm_{k_1}^T\vm_{k_2} = \left\{
 	\hspace{-4mm}
 	\begin{array}{ll}
 	&1, \quad\quad\quad\quad k_1 = k_2 \\
 	& - \frac{1}{K-1},\quad\ \ \ k_1\ne k_2
 \end{array}
 \right.
 \end{equation}
where $\vm_{k_1}$ is the classifier vector for class $k_1$ and $K$ is the number of total classes. 
It has been observed that the intermediate layers in a neural network gradually increase the separability among different classes \cite{he2023law}, and using an ETF structure as the last-layer classifier head does not harm representation learning \cite{zhu2021geometric,yang2022inducing,yang2023neural,yang2023neural-extend,zhong2023understanding,du2023problem}. Inspired by these studies, we initialize and fix an ETF classifier for each local layer to calculate the cross entropy error. It enables the gradient $\delta\vx_{k}$ to better induce separability, and our $\gL_{k}^{SGR}$ helps to carry the local separability into the last layer. 
In this way, our method keeps the benefits of DRTP but achieves significantly better performance than the methods in Table \ref{table1}, as will be shown in experiments. 

\subsection{Justification}
\label{justification}

Although our method adds a regularization on the classification loss $\gL_{k}$, which may degrade the original optimality of $\gL_{k}$ if there is only one layer, the following proposition based on a linear two-layer model shows that training with our $\gL_2^{SGR}$ loss on the second layer helps to induce a larger reduction of the classification loss $\gL_2$ in inference. 
\begin{proposition}
	\label{proposition2}
	Consider a two-layer linear model composed of $\vx_1={\vtheta_{1}}\vx_0$ and $\vx_2={\vtheta_{2}}\vx_1$, where ${\vtheta_{1}}$ and ${\vtheta_{2}}$ are the learnable linear matrices, and $\vx_0$ and $\vx_2$ are the input and output of the model, respectively. We use fixed ETF structures $\mM_1$ and $\mM_2$ for the local classifier heads and the cross entropy (CE) loss for local errors $\gL_1(\mM_1\vx_1, y)$ and $\gL_2(\mM_2\vx_2, y)$. Denote ${\gL}'_2$  as the CE loss value in inference after performing one step of gradient descent of $\gL_1$ and $\gL_2$ by Eq. (\ref{update1}) and (\ref{update2}), and denote $\hat{\gL}'_2$ as the one with our $\gL_2^{SGR}$ on the second layer. Assume that the prediction logit for the ground truth label in the second layer is larger than the one  in the first layer, we have $\hat{\gL}'_2\le {\gL}'_2$, when $\gL_2^{SGR}$ is small. 
\end{proposition}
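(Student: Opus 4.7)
The plan is to Taylor-expand both $\gL'_2$ and $\hat{\gL}'_2$ about the initial parameters to first order in the step sizes. Since both procedures perform the same layer-1 update $\Delta\vtheta_1=-\eta_1\nabla_{\vtheta_1}\gL_1$, the post-update feature $\vx_1'=\vtheta_1^{(1)}\vx_0$ is identical in the two scenarios, so the only discrepancy lies in the layer-2 update: the SGR version replaces $\Delta\vtheta_2=-\eta_2\nabla_{\vtheta_2}\gL_2$ by $\Delta\hat{\vtheta}_2=\Delta\vtheta_2-\eta_2\lambda\nabla_{\vtheta_2}\gL_2^{SGR}$, with all gradients evaluated at $(\vtheta_1^{(1)},\vtheta_2)$. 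Subtracting the two expansions, the common terms cancel and I obtain
\begin{equation*}
\hat{\gL}'_2-\gL'_2 \;\approx\; -\eta_2\lambda\,\langle\nabla_{\vtheta_2}\gL_2,\ \nabla_{\vtheta_2}\gL_2^{SGR}\rangle,
\end{equation*}
so the claim $\hat{\gL}'_2\le\gL'_2$ reduces to showing non-negativity of this Frobenius inner product.

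Next I would compute the inner product explicitly. Using $\nabla_{\vtheta_2}\gL_2=\vg_2^{x_2}(\vx_1')^T$ with $\vg_2^{x_2}=\mM_2^T\vs_2$ and $\vs_2=\mathrm{softmax}(\mM_2\vx_2)-\ve_y$, and writing the residual $\vr=\vtheta_2^T\vg_2^{x_2}-\vg_1$, the chain rule applied to $\gL_2^{SGR}=\|\vr\|^2$ (accounting for both the direct dependence on $\vtheta_2^T$ and the indirect dependence of $\vg_2^{x_2}$ through $\vx_2=\vtheta_2\vx_1'$) yields
\begin{equation*}
\nabla_{\vtheta_2}\gL_2^{SGR} \;=\; 2\vg_2^{x_2}\vr^T + 2(H_2\vtheta_2\vr)(\vx_1')^T,
\end{equation*}
where $H_2=\mM_2^T D_p\mM_2$ is PSD and $D_p=\mathrm{diag}(\vp_2)-\vp_2\vp_2^T$ is the softmax Jacobian. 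Pairing with $\vg_2^{x_2}(\vx_1')^T$ reduces the inner product to $2\|\vg_2^{x_2}\|^2\,\langle\vr,\vx_1'\rangle+2\|\vx_1'\|^2\,(\vg_2^{x_2})^T H_2\vtheta_2\vr$, a linear-in-$\vr$ quantity whose sign drives the desired inequality.

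I expect the main obstacle to be pinning down this sign, since it depends on the direction of $\vr=\tilde{\mM}^T\vs_2-\mM_1^T\vs_1$ with $\tilde{\mM}=\mM_2\vtheta_2$. The plan is to exploit the logit hypothesis $[\mM_2\vx_2]_y\ge [\mM_1\vx_1]_y$, which forces the true-class coordinate to satisfy $|[\vs_2]_y|\le|[\vs_1]_y|$, meaning layer 2 carries a weaker error pull on the correct class than layer 1. Combined with the ETF identity $\mM_k\vm_{k,y}=\tfrac{K}{K-1}\ve_y-\tfrac{1}{K-1}\vone$ and the centering identity $\vone^T D_p=0$, this constrains $\vr$ to be positively aligned with the descent direction $\vx_1'$ of layer~2 (pinning the sign of $\langle\vr,\vx_1'\rangle$) and, through the PSD structure of $H_2$, with the direction $\vtheta_2^T H_2\vg_2^{x_2}$ (pinning the sign of the second term). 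Finally, because $\gL_2^{SGR}$ is small, $\|\vr\|$ is small and $\|\nabla_{\vtheta_2}\gL_2^{SGR}\|=O(\|\vr\|)$, so the omitted $O(\eta_2^2\lambda)$ and $O(\eta_2^2\lambda^2\|\vr\|^2)$ Taylor corrections stay dominated by the linear-in-$\vr$ leading term, yielding $\hat{\gL}'_2\le\gL'_2$.
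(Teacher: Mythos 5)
Your route is genuinely different from the paper's. The paper never Taylor-expands the loss: it computes the post-update output feature in closed form, shows that adding the SGR term displaces it by $\hat{\vx}'_2=\vx'_2-\beta\,\delta\vx_2$ with $\beta=\eta\left(\delta\vx_1^T\vx_1-\delta\vx_2^T\vx_2\right)\ge 0$, and closes with a separate monotonicity lemma (Lemma \ref{app:lemma}) stating that perturbing a feature by a negative multiple of its own ETF--CE gradient strictly decreases the loss. It also differentiates $\gL_2^{SGR}$ treating $\delta\vx_2$ as a constant, so no $H_2$ term appears. Your reduction to the sign of $\langle\nabla_{\vtheta_2}\gL_2,\nabla_{\vtheta_2}\gL_2^{SGR}\rangle$ is a clean alternative framing, but it breaks at exactly the step you flagged as the main obstacle --- and it breaks with the \emph{wrong sign}.

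Concretely, your dominant term is $2\|\vg_2^{x_2}\|^2\langle\vr,\vx_1'\rangle$ with $\vr=\vtheta_2^T\vg_2^{x_2}-\vg_1$. To leading order,
\begin{equation*}
\langle\vr,\vx_1'\rangle\;\approx\;\langle\vr,\vx_1\rangle\;=\;(\vg_2^{x_2})^T\vtheta_2\vx_1-\vg_1^T\vx_1\;=\;\delta\vx_2^T\vx_2-\delta\vx_1^T\vx_1\;=\;\Bigl(\textstyle\sum_k p^{(2)}_k z^{(2)}_k-z^{(2)}_y\Bigr)-\Bigl(\textstyle\sum_k p^{(1)}_k z^{(1)}_k-z^{(1)}_y\Bigr),
\end{equation*}
where $z_k=\vm_k^T\vx$ are the logits. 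The hypothesis $z^{(2)}_y\ge z^{(1)}_y$ is invoked in the paper precisely to conclude $\delta\vx_2^T\vx_2\le\delta\vx_1^T\vx_1$, i.e.\ $\langle\vr,\vx_1\rangle\le 0$ --- the \emph{opposite} of the positive alignment your plan asserts. Plugged into your expansion this gives $\hat{\gL}'_2-\gL'_2\approx-\eta_2\lambda\cdot(\text{nonpositive})\ge 0$, the reverse of the claim, so your sign bookkeeping or the reduction itself must be revisited before anything downstream can stand. The second term $(\vg_2^{x_2})^T H_2\vtheta_2\vr$ cannot rescue this: positive semidefiniteness of $H_2$ signs $\vv^TH_2\vv$, not a bilinear form in the two unrelated vectors $\vg_2^{x_2}$ and $\vtheta_2\vr$. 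Finally, even granting the alignment, your dominance argument requires the linear-in-$\vr$ leading term to be bounded below by $c\|\vr\|$ for some $c>0$, not merely nonnegative, because the discarded corrections are themselves $O(\|\vr\|)$ times higher powers of $\eta_2$. The repair is to follow the paper's structure: track the feature displacement $\hat{\vx}'_2-\vx'_2$ directly, let the hypothesis enter only through the sign of the scalar $\beta$ multiplying $\delta\vx_2$, and finish with the ETF monotonicity lemma rather than a first-order inner-product bound.
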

The result empirically supports that although training with our regularization $\gL_2^{SGR}$ will shift the update direction of $\vtheta_{2}$ from the steepest descent of $\gL_2$, it helps to look for a position of $\vtheta_{2}$ such that the change of $\vx_{1}$ caused by the previous local error also enables to minimize $\gL_2$, which can lead to a larger loss reduction than only optimizing $\vtheta_{2}$.

\begin{table*}[th]  
		\caption{The results (top-1 accuracy) of local training with ResNet-18 on ImageNet. The ``Greedy'' result is our implementation following \cite{belilovsky2019greedy}. All models are divided into 4 local modules according to feature spatial resolution in ResNet-18.}
	\begin{center}
			\begin{tabular}{cc|ccccc}
				\toprule
				Greedy & \multirow{2}{*}{non-greedy} & \multicolumn{5}{c}{w/ SGR}\\
				\cite{belilovsky2019greedy} & & $\lambda=1000$  & $\lambda=2000$ & $\lambda=3000$ &  $\lambda=5000$ & $\lambda=7000$\\				
				\midrule 
				63.65 & 69.44 & 70.32 & 70.53 & 70.62 & 70.73 & 70.69\\
				\bottomrule
			\end{tabular}
		\label{ablate_lambda}
	\end{center}
	\vspace{-3mm}
\end{table*}

\begin{table}[t!]  
		\caption{The results of local training with ResNet-32 and PlainNet on CIFAR-10. Each residual or PlainNet block is a local module. The local classifier is composed of a convolution layer and a linear layer. ``reforward'' denotes forward again for each local update to use the updated output as the input of the next layer.}	
	\begin{center}
			\setlength{\tabcolsep}{10pt}
			\begin{tabular}{lcc}
				\toprule
				Method & ResNet & PlainNet\\
				\midrule
				layer-wise & 84.49$\pm$0.36 & 80.66$\pm$0.42 \\
				w/ reforward & 84.38$\pm$0.52 & 80.74$\pm$0.44 \\
				w/ SGR (ours) & \textbf{85.65$\pm$0.38} & \textbf{81.67$\pm$0.29} \\
				\bottomrule
			\end{tabular}
		\label{localbp}
	\end{center}
\end{table}

\begin{table}[t!]  
		\caption{BP-free training with PlainNet on CIFAR-10 and CIFAR-100. Each linear-nonlinear transformation in PlainNet is a local module. The local classifier is a fixed matrix as Eq. (\ref{etf_classifier}). The 2nd row results are derived from \cite{dellaferrera2022error}. }	
	\begin{center}
		\resizebox{\linewidth}{!}{
			\begin{tabular}{lccc}
				\toprule
				Method & BP-free & CIFAR-10 & CIFAR100\\
				\midrule
				FA  & \ding{55} & 57.51$\pm$0.57 & 27.15$\pm$0.53 \\
				DRTP & \checkmark & 50.53$\pm$0.81 & 20.14$\pm$0.68 \\
				PEPITA & \checkmark & 56.33$\pm$1.35 & 27.56$\pm$0.60 \\
				\midrule
				layer-wise &\checkmark & 69.17$\pm$0.91 & 48.30$\pm$0.64 \\
				w/ reforward & \checkmark& 69.05$\pm$0.63 & 47.12$\pm$0.55 \\
				w/ SGR (ours) &\checkmark & \textbf{72.40$\pm$0.75} & \textbf{49.41$\pm$0.44}\\
				\bottomrule
			\end{tabular}
		}
		\label{bpfree}
	\end{center}
\end{table}

\begin{figure}[t]
	\centering
	\includegraphics[width=1.\linewidth]{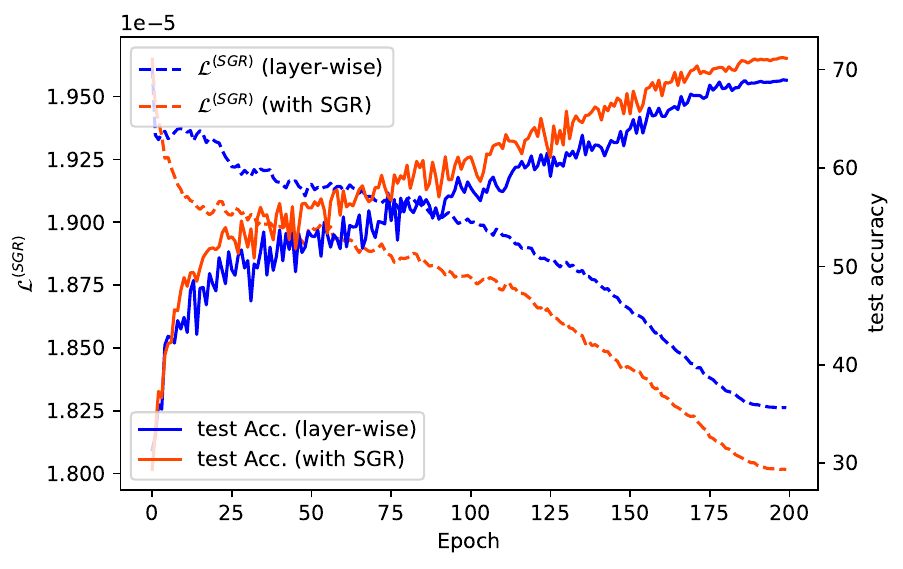}
	\vspace{-6mm}
	\caption{The training curves of average $\gL^{(SGR)}$ of all neighboring layers and accuracy on test set.}
	\label{curve}
	\vspace{-2mm}
\end{figure}

\section{Experiments}

\begin{table*}[t]  
	\renewcommand\arraystretch{1.}
		\caption{Results on ImageNet with different architectures. Experiments are conducted on 8 NVIDIA A100 GPUs. ``Train time'' is the whole training time of each experiment. ``Memory'' is the memory consumption per GPU for training tested with a batchsize of 256. Experiments of the same row are conducted with the same training setting for fair comparison. $K$ is the number of local modules.}	
	\begin{center}
			\begin{tabular}{ccccccc}
				\toprule
				 \multirow{2}{*}{Network} & \multirow{2}{*}{Method} & \multirow{2}{*}{$K$} & {top-1 Acc.} & {top-5 Acc.}  & Train time & Memory \\
				 &&& (\%) & (\%) & (hour) &  (GB)\\
				\midrule
				\multirow{5}{*}{ResNet-50} & Global BP & - & 76.55 & 93.06  & 7.0 & 21.49 \\
				& InfoPro \cite{wang2021revisiting} & 2 & 76.23 & 92.92  & 10.8 & 21.59  ($\uparrow$0.4\%)\\
				& SGR (ours) & 2 & 76.35 & 93.03  & 17.1 & 17.91 ($\downarrow$16.6\%)\\
				& InfoPro \cite{wang2021revisiting} & 3 & 75.33 & 92.57 & 21.2 & 18.73 ($\downarrow$12.8\%) \\
				& SGR (ours) & 3 & 75.42 & 92.57 & 15.1 & \textbf{15.34 ($\downarrow$28.6\%)}  \\
				\midrule      				               				
\multirow{2}{*}{ResNet-50} & Global BP & - & 71.50 & -  & 73.0 & 45.09\\
\multirow{2}{*}{(self-supervised)} & \cite{siddiqui2023blockwise} & 4 & 70.48 & -  & 140.9 & 70.53 ($\uparrow$56.4\%)\\
& SGR (ours) & 2 & 70.30 & 89.22 & 98.6 & \textbf{34.86 ($\downarrow$22.7\%)} \\    					
				\midrule
				\multirow{5}{*}{ResNet-101} & Global BP & - & 77.97 & 94.06  & 11.0 & 31.70\\
                & InfoPro \cite{wang2021revisiting} & 2 & 77.61 & 93.78 & 18.6 & 26.14 ($\downarrow$17.6\%)\\
                & SGR (ours) & 2 & 77.69 & 93.84  & 28.5 & 22.99 ($\downarrow$27.5\%)\\
				& InfoPro \cite{wang2021revisiting} & 3 & 77.02 & 93.47  & 24.3 & 21.87 ($\downarrow$31.0\%)\\
				& SGR (ours) & 3 & 77.02 & 93.24  & 22.9 & \textbf{18.18 ($\downarrow$42.6\%)}\\                
				\midrule
				\multirow{3}{*}{VIT-small} & Global BP & - & 79.40 & 94.11  & 52.2& 20.70 \\
				& InfoPro \cite{wang2021revisiting} & 3 & 78.15 & 94.00 & 61.8 & 14.96 ($\downarrow$27.7\%)\\
				& SGR (ours) & 3 & 78.65 & 94.03  & 62.5 & \textbf{11.73 ($\downarrow$43.3\%)}\\ 
				\midrule
				\multirow{3}{*}{Swin-tiny} & Global BP & - & 81.18 & 95.61  & 43.8 & 26.78\\
				& InfoPro \cite{wang2021revisiting} & 2 & 79.38 & 94.14  & 53.2 & 23.34 ($\downarrow$12.9\%)\\
				& SGR (ours) & 3 & 80.05 & 94.95  & 56.3 & \textbf{15.83 ($\downarrow$40.9\%)}\\				
				\midrule      				               				
				\multirow{3}{*}{Swin-small} & Global BP & - & 83.02 & 96.29  & 75.9 & 42.60\\
				& InfoPro \cite{wang2021revisiting} & 2 & 81.19 & 95.00  & 86.6 & 33.08 ($\downarrow$22.4\%)\\
& SGR (ours) & 3 & 81.64 & 95.45 & 90.8 & \textbf{22.83  ($\downarrow$46.4\%)}\\    							  					
				\bottomrule
			\end{tabular}
		\label{imagenet_result}
	\end{center}
	\vspace{-2mm}
\end{table*}

In experiments, we test the effectiveness of our proposed successive gradient reconciliation (SGR), and demonstrate that it is able to significantly save memory consumption of CNN and Transformer architectures, and surpass previous studies on CIFAR-10, CIFAR-100, and ImageNet datasets. The implementation details are provided in Appendix \ref{training_details}.

\subsection{Ablation Studies}

We ablate the hyper-parameter $\lambda$ in Eq. (\ref{objective}) and integrate our SGR on non-greedy layer-wise training to show its ability to improve performance in both local-BP and BP-free cases. 

We add our $\gL^{(SGR)}$ as defined in Eq. (\ref{sgrloss}) with different $\lambda$ in layer-wise training of ResNet-18 \cite{he2016deep} on ImageNet. As shown in Table \ref{ablate_lambda}, most choices of $\lambda$ achieve more than 1\% performance improvement over non-greedy layer-wise training. 
These results with different $\lambda$ do not have a large deviation, which indicates that our method has stable performance and is not sensitive to the choice of $\lambda$.
As $\lambda$ increases from 1000 to 5000, the accuracy slightly gets better, showing that a stronger reconciliation  of local updates within a proper range can lead to a better performance. 

In local-BP training, we use ResNet-32 and PlainNet that removes the identity connection of ResNet. Each local module may contain multiple layers and the local classifier is composed of a convolution layer and a linear layer, so there is still BP within each local update. As shown in Table \ref{localbp}, when armed with our method, layer-wise training attains more than 1\% accuracy improvement with both ResNet and PlainNet. The reconciliation of local updates can be also naively achieved by a second forward propagation after the local update of each layer such that the input of the next layer is based on the updated output. As shown in Tables \ref{localbp} and \ref{bpfree}, this practice does not bring obvious performance gain because it performs too many steps of optimization for each batch of data and thus overfits in each iteration.

In BP-free training, we use PlainNet where each block is only a linear-nonlinear transformation and the local classifier is a fixed ETF structure as defined in Eq. (\ref{etf_classifier}). Therefore, the gradient \emph{w.r.t.} output feature of each block can be analytically derived and there is no BP within each local module. As shown in Table \ref{bpfree}, our method improves by more than 3\% on CIFAR-10 and more than 1\% on CIFAR-100. As shown in Figure \ref{curve}, when our method is adopted, the average loss value of $\gL^{(SGR)}$ is significantly lower than the baseline without our method, and accordingly, the test accuracy is better throughout training. It reveals that gradient reconciliation among local updates, which our method targets, is correlated with the performance of local learning.

\begin{table*}[!t]  
	\begin{center}
		\caption{Local learning results with ResNet-32 on CIFAR-10 and CIFAR-100 and comparison with prior studies. $K$ is the number of local modules divided from ResNet-32. $K$=16 refers to the case where each residual block including the stem layer correspond to one local module. $K$=3 divides the model according to feature spatial resolution. $K=2$ leaves the last 5 blocks as the second local module.}
		\resizebox{\linewidth}{!}{
			\begin{tabular}{lcccccc}
				\toprule
				\multirow{2}{*}{Method} & \multicolumn{3}{c}{CIFAR-10 (BP: 92.82$\pm$0.22)} & \multicolumn{3}{c}{CIFAR-100 (BP: 74.78$\pm$0.31)} \\
				& $K$=16 & $K$=3 & $K$=2 &  $K$=16 & $K$=3 & $K$=2 \\
				\midrule
				Greedy \cite{belilovsky2019greedy} & 76.96$\pm$0.54 & 85.12$\pm$0.35 & 90.06$\pm$0.43  & 48.77$\pm$0.41 & 63.55$\pm$0.32 & 70,68$\pm$0.38  \\
				DGL \cite{belilovsky2020decoupled} & 84.01$\pm$0.40 & 87.61$\pm$0.51 & 91.05$\pm$0.27 & 63.31$\pm$0.28 & 70,59$\pm$0.39 & 72.34$\pm$0.31 \\
				InfoPro \cite{wang2021revisiting}  & \textbf{85.69$\pm$0.47} & 90.43$\pm$0.36 & 91.74$\pm$0.29 & 66.27$\pm$0.44 & 71.44$\pm$0.24 & 73.59$\pm$0.25 \\
				SGR (ours) & 85.65$\pm$0.38  & \textbf{91.34$\pm$0.45} & \textbf{92.91$\pm$0.36} & \textbf{66.61$\pm$0.31} & \textbf{72.15$\pm$0.27} & \textbf{74.63$\pm$0.20} \\
				\bottomrule
			\end{tabular}
		}
		\label{cifar_result}
	\end{center}
\end{table*}

\begin{table}[t!]  
	\vspace{-2mm}
	\caption{Results of VGG-Net with different depth on CIFAR-10.}	
	\begin{center}
		\resizebox{\linewidth}{!}{
			\setlength{\tabcolsep}{10pt}
			\begin{tabular}{lcccccc}
				\toprule
				Method & $d$=12 & $d$=17 & $d$=27 & $d$=37 & $d$=47 & $d$=57\\
				\midrule
				Global BP & 90.76 & 90.95 & 90.46 & 78.47 & 37.68 & fail\\
				SGR (ours) & 84.81 & 86.38 & 86.95 & 86.09 & 85.30 & 85.51\\
				\bottomrule
			\end{tabular}
			}
		\label{vgg_depth}
	\end{center}
	\vspace{-2mm}
\end{table}

\begin{figure*}[t]
	\begin{subfigure}{0.33\textwidth}
		\centering
		\includegraphics[width=1.\linewidth]{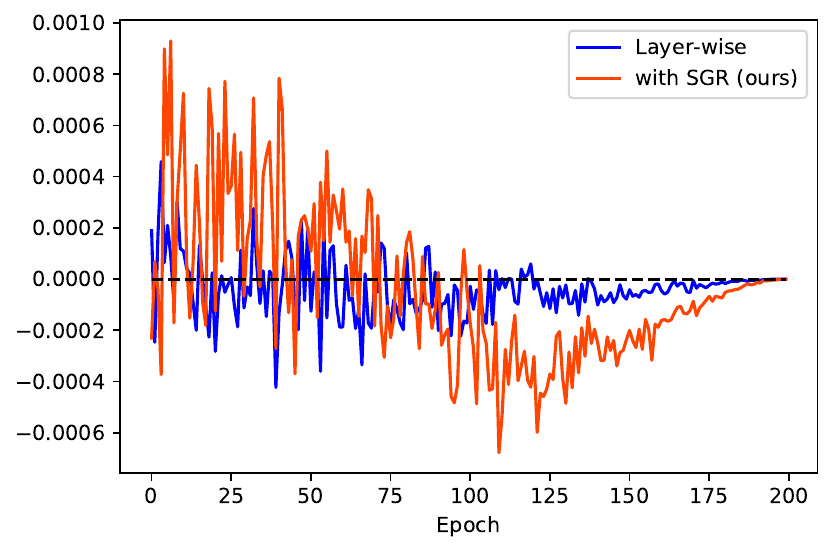}  
		\caption{Layer-2}
		\label{fig:layer2}
	\end{subfigure}
	\begin{subfigure}{0.33\textwidth}
		\centering
		\includegraphics[width=1.\linewidth]{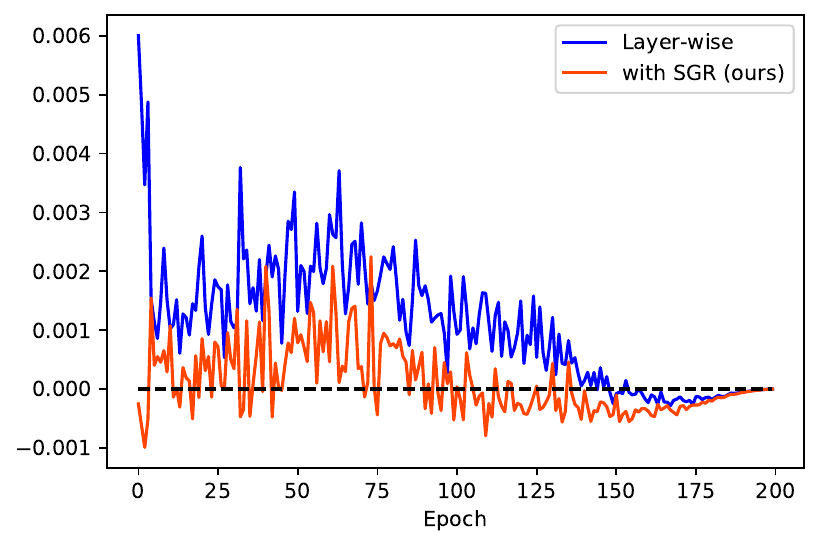}  
		\caption{Layer-3}
		\label{fig:layer3}
	\end{subfigure}
	\begin{subfigure}{0.33\textwidth}
		\centering
		\includegraphics[width=1.\linewidth]{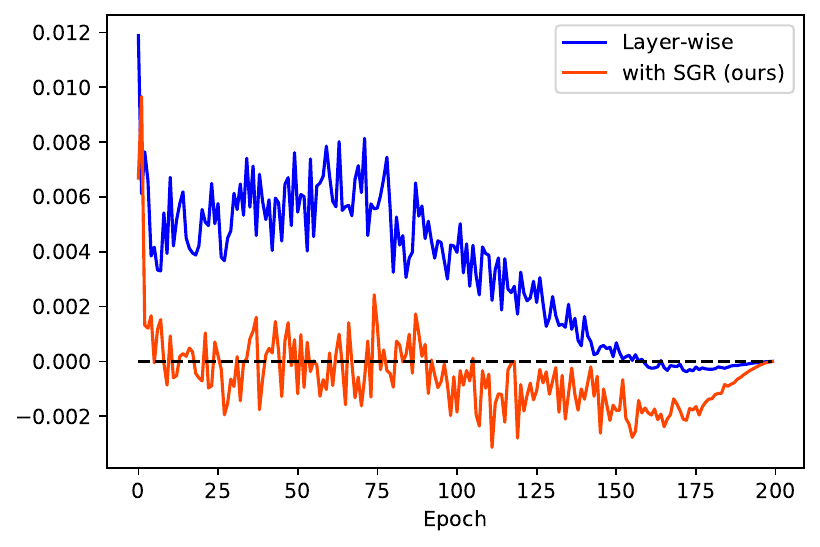}  
		\caption{Layer-4}
		\label{fig:layer4}
	\end{subfigure}
	\vspace{-5mm}
	\caption{We measure the change of classification loss in each layer caused by the input update from prior layers using a 4-layer PlainNet. The black dashed line denotes the zero baseline such that the area below it means that the updates of prior layers can produce a new output feature enabling to reduce the loss value of the current layer as its input. }
	\vspace{-3mm}
	\label{fig:diff_layer}
\end{figure*}

\subsection{Results on ImageNet}

Our method is able to help CNN and Transformer \cite{dosovitskiy2020image,liu2021swin} architectures save a significant proportion of memory consumption on ImageNet, while achieving competitive performance with global BP. As shown in Table \ref{imagenet_result}, our method on ResNet-50 has better performances than \cite{wang2021revisiting} and a similar performance to \cite{siddiqui2023blockwise} with larger memory conservation than both methods in supervised and self-supervised learning, respectively. When $K=3$, our method trains faster with lower memory consumption than InfoPro on both ResNet-50 and ResNet-101. On the three Transformer-based architectures, our method saves more than 40\% memory without inducing a large increment of train time or unbearable performance loss (more than 1.5\%) compared with global BP, and achieves better performances than InfoPro with close train time overhead.

\subsection{Results on CIFAR}

We verify the performance of our method using ResNet-32 on CIFAR-10 and CIFAR-100 and compare with greedy layer-wise training \cite{belilovsky2019greedy}, DGL \cite{belilovsky2020decoupled}, and InfoPro \cite{wang2021revisiting}. Following \cite{belilovsky2019greedy}, we adopt a local classifier composed of a convolution layer and a linear layer for classification. As shown in Table \ref{cifar_result}, our SGR surpasses all compared methods in most cases. Especially when $K$=2, we achieve a competitive performance with global BP (our 92.91\% v.s. BP 92.82\% on CIFAR-10, and our 74.63\% v.s. BP 74.78\% on CIFAR-100). 

Since local learning spares the effort of a global BP, it will not suffer from the notorious gradient vanishing and explosion problems that are easy to emerge in training deep neural networks such as the VGG architecture \cite{simonyan2014very}. As shown in Table \ref{vgg_depth}, we compare the performances of our method and training with global BP in VGG-Net of different depth. As the depth goes larger, the performance of global BP decreases sharply until a failed convergence. In contrast, our SGR is able to keep a stable performance even in a very deep network.

\subsection{Analysis}

In traditional local learning, local updates are not reconciled, so the updates of prior layers do not necessarily change the output feature towards a direction that minimizes the current local loss. In this subsection, we investigate whether our SGR helps to remedy this defect. We measure the classification loss change in one layer caused by the update of its input feature, \emph{i.e.,} $\Delta \gL_k=\gL_k(\vx_{k-1}^{(t+1)})-\gL_k(\vx_{k-1}^{(t)})$, where $\vx_{k-1}^{(t)}$ is the input feature of this local layer in the $t$-th iteration and $\vx_{k-1}^{(t+1)}$ is the new one after the update of all its previous layers $1,..,k-1$.

As shown in Figure \ref{fig:diff_layer}, for a 4-layer PlainNet, $\Delta \gL_2$ in the 2nd layer with and without our method are both surrounding zero. In deeper layers, the scale of $\Delta \gL$ without our method (the blue one) is growing larger, which indicates that the discordant local updates have an accumulative effect on deep layers. Besides, the curves without our method in the 3-rd and 4-th layers are above  zero most of the time, implying that the prior local updates contribute little to the deep layers' learning. As a comparison, our method does not grow the scale of $\Delta \gL$ apparently in deep layers. Particularly in the 4-th layer, the curve with our method is below zero in most epochs, which means the first 3 layers help to produce a better feature as the input of the last layer, and is in line with our better performance observed in Figure \ref{curve}.

\section{Conclusion}

In this paper, we point out a fundamental defect of local learning that global BP is naturally immune to. Our theoretical result indicates that the convergence of local learning cannot be assured when the local updates are not reconciled. Based on the result, we propose a method, named successive gradient reconciliation, to successively reconcile local updates without breaking gradient isolation or introducing any learnable parameters. Our method can be applied to both local-BP and BP-free local learning. Experimental results demonstrate that our method surpasses previous methods and is able to achieve a comparable performance with global BP saving more than 40\% memory consumption for CNN and Transformer architectures. Future studies may explore applying our method to large model finetuning. 

\section*{Impact Statement}


Our method reduces memory consumption in training neural network, making machine learning models more friendly in resource-limited environments. Furthermore, by aligning more closely with biological learning processes, our method may contribute to the intersection of AI and neuroscience, potentially leading to developments of more biologically plausible AI systems. The main goal of this paper is to advance the field of machine learning. There is no ethic impact that we feel must be specifically highlighted here.

\section*{Acknowledgements}

This work was supported by KAUST-Oxford CRG Grant Number DFR07910, the King Abdullah University of Science and Technology (KAUST) Office of Sponsored Research (OSR) under Award No. OSR-CRG2021-4648, SDAIA-KAUST Center of Excellence in Data Science, Artificial Intelligence (SDAIA-KAUST AI), and a UKRI grant Turing AI Fellowship (EP/W002981/1).



\bibliography{BPE}
\bibliographystyle{icml2024}

\newpage
\appendix
\onecolumn
\section{Proof of Theorem \ref{theorem}}

We restate the assumptions and results, and then provide the proof. 

\begin{assumption}[PL Condition]
	\label{app:assump1}
	Let $\gL_2^*$ be the optimal function value of the second layer loss $\gL_2$. There exists a $\mu$ such that $\forall \vtheta_1, \vtheta_2$, we have: 
	\begin{align}
		\left\| \nabla \gL_2(\vtheta_1, \vtheta_2) \right\|^2&=\left\| \nabla_{\vtheta_1} \gL_2(\vtheta_1, \vtheta_2) \right\|^2+\left\| \nabla_{\vtheta_2} \gL_2(\vtheta_1, \vtheta_2) \right\|^2 \notag \\
		&\ge \mu\left(  \gL_2(\vtheta_1, \vtheta_2) - \gL_2^* \right), \notag
	\end{align}
	where $\nabla\gL_2=[\nabla_{\vtheta_1}\gL_2; \nabla_{\vtheta_2}\gL_2]$.
\end{assumption}

\begin{assumption}[Layer-wise Lipschitz and global Lipschitz]
	\label{app:assump2}
	There exists $L_1$, $L_2$, and $L > 0$ such that for all $\vtheta_{1,a}, \vtheta_{1,b}, \vtheta_1, \vtheta_{2,a}, \vtheta_{2,b}, \vtheta_2$, we have:
	\begin{align}
		&\left\| \nabla_{\vtheta_1} \gL_2(\vtheta_{1,a}, \vtheta_2) - \nabla_{\vtheta_1} \gL_2(\vtheta_{1,b}, \vtheta_2) \right\| \le L_1 \left\| \vtheta_{1,a} - \vtheta_{1,b} \right\|,  \notag\\
		&\left\| \nabla_{\vtheta_2} \gL_2(\vtheta_{1}, \vtheta_{2,a}) - \nabla_{\vtheta_2} \gL_2(\vtheta_{1}, \vtheta_{2,b}) \right\| \le L_2 \left\| \vtheta_{2,a} - \vtheta_{2,b} \right\| \notag,
	\end{align}	
	and
	\vspace{-1mm}
	\begin{align}
		\left\| \nabla \gL_2(\vtheta_{1,a}, \vtheta_{2,a}) - \nabla \gL_2(\vtheta_{1,b}, \vtheta_{2,b}) \right\| \le L \left\|
		\begin{bmatrix}
			\vtheta_{1,a} - \vtheta_{1,b} \\
			\vtheta_{2,a} - \vtheta_{2,b}
		\end{bmatrix} 
		\right\|. \notag
	\end{align}
\end{assumption}

\begin{theorem}
	\label{app:theorem}
	Based on Assumptions \ref{assump1} and \ref{assump2}, if the learning rates are set as $\eta_1^{(i)}=\eta_1$ and $\eta_2^{(i)}=\eta_2$, where
	\begin{equation}
		\left\{\hspace{-4mm}
		\begin{array}{cl}
			& 0<\eta_1\le \min \left( \frac{\sqrt{L_1^2+8L^2}-L_1}{4L^2}, \frac{2}{\mu}, \frac{1}{2L_2} \right)\\
			& \max \left(0, \frac{1-\sqrt{1-2L_2\eta_1}}{L_2} \right) < \eta_2 \le \frac{1+\sqrt{1-2L_2\eta_1}}{L_2} \notag
		\end{array}
		\right.
		,
	\end{equation}
	we have the following convergence
	\begin{equation}
		\label{app:convergence1}
		\gL_2^{(i+1,i+1)}-\gL_2^* \le \left(1-\alpha\mu\right)\left(\gL_2^{(i,i)}-\gL_2^*\right)
		+ \alpha \left\| \bm{\epsilon}^{(i)} \right\|^2,
	\end{equation}
	and recursively applying Eq. (\ref{app:convergence1}) we have
	\begin{align}
		\label{app:convergence2}
		\gL_2^{(i+1,i+1)}-\gL_2^* \le& \left(1-\alpha\mu\right)^{i+1}\left(\gL_2^{(0,0)}-\gL_2^*\right) \notag \\
		&+ \alpha \sum_{k=0}^{i}(1-\alpha\mu)^k\left\| \bm{\epsilon}^{(i-k)} \right\|^2,
	\end{align}
	where $\gL_2^{(i,i)}$ denotes $\gL_2(\vtheta_1^{(i)}, \vtheta_2^{(i)})$, \emph{i.e.,} the second layer loss value with parameters $\vtheta_1^{(i)}$ and $\vtheta_2^{(i)}$ in the $i$-th iteration, $\alpha=\frac{\eta_1}{2}$, and $\bm{\epsilon}^{(i)} \triangleq \nabla_{\vtheta_1}\gL_2^{(i,i)} - \nabla_{\vtheta_1}\gL_1^{(i)}$.  
\end{theorem}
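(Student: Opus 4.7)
The plan is to analyze $\gL_2^{(i+1,i+1)} - \gL_2^{(i,i)}$ by decomposing it along the two sequential updates, bound each piece with a descent lemma, combine them into a one-step inequality of the form $-c_1\|\nabla\gL_2^{(i,i)}\|^2 + c_2\|\bm{\epsilon}^{(i)}\|^2$, and finally invoke the PL condition to close the recursion.

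First I would split
\begin{equation}
\gL_2^{(i+1,i+1)} - \gL_2^{(i,i)} = \underbrace{\gL_2(\vtheta_1^{(i+1)},\vtheta_2^{(i)}) - \gL_2(\vtheta_1^{(i)},\vtheta_2^{(i)})}_{A} + \underbrace{\gL_2(\vtheta_1^{(i+1)},\vtheta_2^{(i+1)}) - \gL_2(\vtheta_1^{(i+1)},\vtheta_2^{(i)})}_{B}. \notag
\end{equation}
Because only $\vtheta_1$ changes in $A$ and only $\vtheta_2$ changes in $B$, the layer-wise Lipschitz constants from Assumption \ref{assump2} give descent lemmas with constants $L_1$ and $L_2$, respectively. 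Substituting the update rule \eqref{update2} into $B$ yields $B \le -(\eta_2 - L_2\eta_2^2/2)\|\nabla_{\vtheta_2}\gL_2^{(i+1,i)}\|^2$, which is negative under the stated $\eta_2$ range. For $A$, substituting \eqref{update1} and then rewriting $\nabla_{\vtheta_1}\gL_1^{(i)} = \nabla_{\vtheta_1}\gL_2^{(i,i)} - \bm{\epsilon}^{(i)}$ is the critical trick that makes the mismatch visible; Young's inequality then splits the resulting cross term into a $\|\nabla_{\vtheta_1}\gL_2^{(i,i)}\|^2$ contribution and a $\|\bm{\epsilon}^{(i)}\|^2$ contribution, while the quadratic remainder $\|\nabla_{\vtheta_1}\gL_1^{(i)}\|^2$ is bounded analogously.

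To combine $A$ and $B$ into a single multiple of $\|\nabla\gL_2^{(i,i)}\|^2 = \|\nabla_{\vtheta_1}\gL_2^{(i,i)}\|^2 + \|\nabla_{\vtheta_2}\gL_2^{(i,i)}\|^2$, the gradient $\nabla_{\vtheta_2}\gL_2^{(i+1,i)}$ sitting inside $B$ must be transferred back to the anchor point $(\vtheta_1^{(i)},\vtheta_2^{(i)})$. I would use the elementary inequality $\|a\|^2 \ge \tfrac{1}{2}\|b\|^2 - \|a-b\|^2$ together with the global Lipschitz estimate
\begin{equation}
\bigl\|\nabla_{\vtheta_2}\gL_2^{(i+1,i)} - \nabla_{\vtheta_2}\gL_2^{(i,i)}\bigr\| \le L\bigl\|\vtheta_1^{(i+1)}-\vtheta_1^{(i)}\bigr\| = L\eta_1\bigl\|\nabla_{\vtheta_1}\gL_1^{(i)}\bigr\|, \notag
\end{equation}
which re-expands via $\nabla_{\vtheta_1}\gL_1^{(i)} = \nabla_{\vtheta_1}\gL_2^{(i,i)} - \bm{\epsilon}^{(i)}$ and contributes extra $\eta_1^2 L^2\|\nabla_{\vtheta_1}\gL_2^{(i,i)}\|^2$ and $\eta_1^2 L^2\|\bm{\epsilon}^{(i)}\|^2$ terms. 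Adding all pieces, the coefficient attached to $\|\nabla_{\vtheta_1}\gL_2^{(i,i)}\|^2$ is a quadratic expression in $\eta_1$ involving $L_1$ and $L$, and the coefficient attached to $\|\nabla_{\vtheta_2}\gL_2^{(i,i)}\|^2$ is a quadratic expression in $\eta_2$ scaled by the descent factor $(\eta_2-L_2\eta_2^2/2)$. The condition $0<\eta_1 \le (\sqrt{L_1^2+8L^2}-L_1)/(4L^2)$ is exactly what forces the first coefficient to be $\le -\eta_1/2$, and the quadratic constraint $L_2\eta_2^2 - 2\eta_2 + 2\eta_1 \le 0$ (whose roots are the two bounds displayed in the theorem) forces the second coefficient to match the same $-\eta_1/2 = -\alpha$.

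Once this uniform descent-with-error inequality $A+B \le -\alpha\|\nabla\gL_2^{(i,i)}\|^2 + \alpha\|\bm{\epsilon}^{(i)}\|^2$ is in hand, Assumption \ref{assump1} gives $\|\nabla\gL_2^{(i,i)}\|^2 \ge \mu(\gL_2^{(i,i)}-\gL_2^*)$, and subtracting $\gL_2^*$ from both sides of the decomposition yields \eqref{convergence1}; the further conditions $\eta_1 \le 2/\mu$ and $\eta_1 \le 1/(2L_2)$ ensure $1-\alpha\mu \in [0,1)$ and that the $\eta_2$ interval is nonempty, respectively. Unrolling \eqref{convergence1} by induction on $i$ then produces \eqref{convergence2}. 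The main obstacle I anticipate is the bookkeeping in the previous paragraph: several Young's inequalities with distinct free parameters must be chosen in a coordinated way so that, after absorbing the $L$-Lipschitz cross term, the two coefficients on $\|\nabla_{\vtheta_1}\gL_2^{(i,i)}\|^2$ and $\|\nabla_{\vtheta_2}\gL_2^{(i,i)}\|^2$ end up \emph{equal} to $-\alpha$; it is this symmetry requirement that dictates the precise admissible $(\eta_1,\eta_2)$ region in the theorem statement.
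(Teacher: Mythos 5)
Your proposal is correct and follows essentially the same route as the paper: the paper organizes the argument into two lemmas (a two-step descent bound on $\gL_2^{(i,i)}-\gL_2^{(i+1,i+1)}$ in terms of $\|\nabla_2\gL_2^{(i+1,i)}\|^2$, $\|\nabla_1\gL_1^{(i)}\|^2$ and the cross term $\nabla_1^T\gL_1^{(i)}\bm{\epsilon}^{(i)}$, plus an upper bound on $\|\nabla\gL_2^{(i,i)}\|^2-\|\bm{\epsilon}^{(i)}\|^2$ in the same three quantities) and then matches coefficients with $\alpha=\eta_1/2$, which is algebraically the same as your merged computation with the $c=1$ Young split of the cross term and the $\|a\|^2\ge\tfrac12\|b\|^2-\|a-b\|^2$ transfer of $\nabla_2\gL_2^{(i+1,i)}$ back to the anchor point. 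The only bookkeeping caveat is that the correction term from that transfer must carry the reduced coefficient $\eta_1$ (obtained from $\eta_2-\tfrac{L_2}{2}\eta_2^2\ge\eta_1$) rather than the full descent factor, exactly as in the paper's Lemma A.4, after which your stated learning-rate conditions fall out verbatim.
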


\begin{lemma}
	\label{lemma1}
	With assumption \ref{app:assump1}, we have 
	\begin{equation}
		\gL_2^{(i,i)} - \gL_2^{(i+1,i+1)} \ge \eta_2^{(i)} \left(1-\frac{L_2}{2}\eta_2^{(i)} \right)\left\|\nabla_2 \gL_2^{(i+1,i)} \right\|^2 + \eta_1^{(i)}\nabla_1^T\gL_1^{(i)}\bm{\epsilon}^{(i)} + \eta_1^{(i)}\left(1-\frac{L_1}{2} \eta_1^{(i)} \right)\left\| \nabla_1\gL_1^{(i)}\right\|^2,
	\end{equation}
	where $\bm{\epsilon}^{(i)}=\nabla_1\gL_2^{(i,i)} - \nabla_1\gL_1^{(i)}$, $\nabla_1\gL_2^{(i,i)}$ is the abbreviation of $\nabla_{\vtheta_1}\gL_2(\vtheta_1^{(i)}, \vtheta_2^{(i)})$, and $\nabla_1\gL_1^{(i)}$ refers to $\nabla_{\vtheta_1}\gL_1(\vtheta_1^{(i)})$.
\end{lemma}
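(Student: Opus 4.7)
The plan is to decompose the one-step progress on $\gL_2$ into the two phases that match the two update rules, namely
\begin{equation}
\gL_2^{(i,i)} - \gL_2^{(i+1,i+1)} \;=\; \bigl[\gL_2^{(i,i)} - \gL_2^{(i+1,i)}\bigr] \;+\; \bigl[\gL_2^{(i+1,i)} - \gL_2^{(i+1,i+1)}\bigr], \notag
\end{equation}
and to apply the standard descent-style quadratic upper bound implied by the layer-wise gradient Lipschitz property (Assumption \ref{app:assump2}; note the lemma statement's reference to Assumption \ref{app:assump1} appears to be a typo, since the PL condition plays no role here) to each phase separately, using $L_2$ for the $\vtheta_2$ phase and $L_1$ for the $\vtheta_1$ phase.

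The second phase is routine. Only $\vtheta_2$ moves while $\vtheta_1$ is pinned at $\vtheta_1^{(i+1)}$, so layer-wise Lipschitz in $\vtheta_2$ gives
\begin{equation}
\gL_2^{(i+1,i+1)} \le \gL_2^{(i+1,i)} + \nabla_2 \gL_2^{(i+1,i)T}\bigl(\vtheta_2^{(i+1)} - \vtheta_2^{(i)}\bigr) + \tfrac{L_2}{2}\bigl\|\vtheta_2^{(i+1)} - \vtheta_2^{(i)}\bigr\|^2. \notag
\end{equation}
Substituting the update $\vtheta_2^{(i+1)} - \vtheta_2^{(i)} = -\eta_2^{(i)}\nabla_2\gL_2^{(i+1,i)}$ from Eq.~(\ref{update2}) and rearranging produces the first term $\eta_2^{(i)}(1-\tfrac{L_2}{2}\eta_2^{(i)})\|\nabla_2\gL_2^{(i+1,i)}\|^2$ on the right-hand side of the lemma.

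The first phase is where the mismatch $\bm{\epsilon}^{(i)}$ enters. Applying the same descent inequality in $\vtheta_1$ (with $\vtheta_2$ held at $\vtheta_2^{(i)}$) yields
\begin{equation}
\gL_2^{(i+1,i)} \le \gL_2^{(i,i)} + \nabla_1\gL_2^{(i,i)T}\bigl(\vtheta_1^{(i+1)} - \vtheta_1^{(i)}\bigr) + \tfrac{L_1}{2}\bigl\|\vtheta_1^{(i+1)} - \vtheta_1^{(i)}\bigr\|^2. \notag
\end{equation}
The crucial detail is that Eq.~(\ref{update1}) moves $\vtheta_1$ along $-\eta_1^{(i)}\nabla_1\gL_1^{(i)}$, not along $-\eta_1^{(i)}\nabla_1\gL_2^{(i,i)}$. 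I would substitute this actual step and then rewrite $\nabla_1\gL_2^{(i,i)} = \nabla_1\gL_1^{(i)} + \bm{\epsilon}^{(i)}$ inside the inner product, so that the cross term splits exactly into the aligned-descent contribution $\eta_1^{(i)}\|\nabla_1\gL_1^{(i)}\|^2$ and the mismatch contribution $\eta_1^{(i)}\nabla_1^T\gL_1^{(i)}\bm{\epsilon}^{(i)}$. Together with the $\frac{L_1}{2}(\eta_1^{(i)})^2\|\nabla_1\gL_1^{(i)}\|^2$ term from the quadratic, this gives the remaining two summands of the lemma.

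Adding the two phase inequalities and flipping signs yields the claim. There is no real obstacle: the argument is just two applications of the descent lemma, and the only modelling choice is to expand the inner product around $\nabla_1\gL_1^{(i)}$ (the direction the algorithm actually moves), which is exactly what makes the resulting bound useful in Theorem \ref{app:theorem}, since the mismatch term $\bm{\epsilon}^{(i)}$ there is what later obstructs unconditional convergence of $\gL_2$.
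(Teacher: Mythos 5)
Your proposal is correct and follows essentially the same route as the paper: two applications of the descent lemma (one in $\vtheta_2$ at the intermediate point $(\vtheta_1^{(i+1)},\vtheta_2^{(i)})$ with constant $L_2$, one in $\vtheta_1$ with constant $L_1$), summed, followed by the substitution $\nabla_1\gL_2^{(i,i)}=\nabla_1\gL_1^{(i)}+\bm{\epsilon}^{(i)}$ to split the cross term. Your observation that the lemma should cite the Lipschitz assumption (Assumption \ref{app:assump2}) rather than the PL condition is also right — the paper's own proof invokes Assumption \ref{app:assump2}.
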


\begin{lemma}
	\label{lemma2}
	With assumption \ref{app:assump1}, we have 
	\begin{equation}
		\left\| \nabla \gL_2^{(i,i)} \right\|^2 - \left\| \bm{\epsilon}^{(i)}\right\|^2\le \left( 2\left(L\eta_1^{(i)} \right)^2+1 \right)\left\| \nabla_1\gL_1^{(i)} \right\|^2 + 2\left\| \nabla_2\gL_2^{(i+1,i)}\right\|^2 + 2\nabla_1^T\gL_1^{(i)}\bm{\epsilon}^{(i)},
	\end{equation}
	where $\bm{\epsilon}^{(i)}=\nabla_1\gL_2^{(i,i)} - \nabla_1\gL_1^{(i)}$, and $\nabla\gL_2^{(i,i)}=[\nabla_{\vtheta_1}\gL_2^{(i,i)}; \nabla_{\vtheta_2}\gL_2^{(i,i)}]$.
\end{lemma}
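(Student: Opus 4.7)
\medskip

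\noindent\textbf{Proof proposal for Lemma \ref{lemma2}.}

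The plan is to split $\|\nabla \gL_2^{(i,i)}\|^2$ into its $\vtheta_1$- and $\vtheta_2$-components and handle each separately. By the block definition of $\nabla\gL_2$, we have the exact identity
\begin{equation*}
\bigl\|\nabla \gL_2^{(i,i)}\bigr\|^2 \;=\; \bigl\|\nabla_1 \gL_2^{(i,i)}\bigr\|^2 \;+\; \bigl\|\nabla_2 \gL_2^{(i,i)}\bigr\|^2,
\end{equation*}
so it suffices to bound each term and subtract $\|\bm{\epsilon}^{(i)}\|^2$ appropriately.

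For the first term, I would simply use the definition $\nabla_1\gL_2^{(i,i)} = \nabla_1\gL_1^{(i)} + \bm{\epsilon}^{(i)}$ and expand the square to get
\begin{equation*}
\bigl\|\nabla_1\gL_2^{(i,i)}\bigr\|^2 \;=\; \bigl\|\nabla_1\gL_1^{(i)}\bigr\|^2 \;+\; 2\,\nabla_1^T\gL_1^{(i)}\bm{\epsilon}^{(i)} \;+\; \bigl\|\bm{\epsilon}^{(i)}\bigr\|^2,
\end{equation*}
which, after moving $\|\bm{\epsilon}^{(i)}\|^2$ to the left, already accounts for the first, fourth, and subtracted terms in the claimed inequality. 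The remaining task is to bound $\|\nabla_2\gL_2^{(i,i)}\|^2$ by the stated $2\|\nabla_2\gL_2^{(i+1,i)}\|^2 + 2(L\eta_1^{(i)})^2 \|\nabla_1\gL_1^{(i)}\|^2$ contribution.

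The key step, and the one I expect to be slightly delicate, is converting between $\vtheta_1^{(i)}$ and $\vtheta_1^{(i+1)}$ in the second block of the gradient. The layer-wise Lipschitz condition controls $\nabla_{\vtheta_2}\gL_2$ only under changes in $\vtheta_2$, so it is not directly applicable here; instead I would invoke the global Lipschitz assumption, which yields
\begin{equation*}
\bigl\|\nabla_2\gL_2^{(i,i)} - \nabla_2\gL_2^{(i+1,i)}\bigr\| \;\le\; \bigl\|\nabla\gL_2(\vtheta_1^{(i)},\vtheta_2^{(i)}) - \nabla\gL_2(\vtheta_1^{(i+1)},\vtheta_2^{(i)})\bigr\| \;\le\; L\,\bigl\|\vtheta_1^{(i)} - \vtheta_1^{(i+1)}\bigr\|.
\end{equation*}
Substituting the update rule $\vtheta_1^{(i+1)} = \vtheta_1^{(i)} - \eta_1^{(i)} \nabla_1\gL_1^{(i)}$ gives $\|\nabla_2\gL_2^{(i,i)} - \nabla_2\gL_2^{(i+1,i)}\| \le L\eta_1^{(i)}\|\nabla_1\gL_1^{(i)}\|$.

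Finally, combining the triangle inequality with the elementary bound $(a+b)^2 \le 2a^2 + 2b^2$ yields
\begin{equation*}
\bigl\|\nabla_2\gL_2^{(i,i)}\bigr\|^2 \;\le\; 2\bigl\|\nabla_2\gL_2^{(i+1,i)}\bigr\|^2 \;+\; 2\bigl(L\eta_1^{(i)}\bigr)^2 \bigl\|\nabla_1\gL_1^{(i)}\bigr\|^2.
\end{equation*}
Adding this to the expansion of $\|\nabla_1\gL_2^{(i,i)}\|^2$ above, collecting the $\|\nabla_1\gL_1^{(i)}\|^2$ coefficients into $1 + 2(L\eta_1^{(i)})^2$, and moving $\|\bm{\epsilon}^{(i)}\|^2$ to the left-hand side produces exactly the claimed inequality. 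The whole argument is essentially algebraic; the only conceptual subtlety is recognizing that the global (rather than layer-wise) Lipschitz bound is needed to cross-couple the change in $\vtheta_1$ to the $\vtheta_2$-component of the gradient.
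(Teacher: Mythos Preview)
Your proposal is correct and follows essentially the same route as the paper: split the full gradient norm into its $\vtheta_1$- and $\vtheta_2$-blocks, expand the $\vtheta_1$-block via $\nabla_1\gL_2^{(i,i)} = \nabla_1\gL_1^{(i)} + \bm{\epsilon}^{(i)}$, and handle the $\vtheta_2$-block by the triangle inequality together with the global Lipschitz constant $L$ applied to the step $\vtheta_1^{(i)} \to \vtheta_1^{(i+1)}$. Your observation that the layer-wise constant $L_2$ is inapplicable here and that the global $L$ is what couples the $\vtheta_1$-update to the $\vtheta_2$-gradient is exactly the point the paper exploits (they add back the nonnegative $\|\nabla_1\gL_2^{(i,i)}-\nabla_1\gL_2^{(i+1,i)}\|^2$ term to reconstruct the full gradient difference, whereas you bound the block by the full vector directly; the two are equivalent).
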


\begin{proof}[Proof of Lemma \ref{lemma1}]
	The layer wise training based on Eq. (\ref{update1}) and (\ref{update2}) is restated as follows:
	\begin{align}
		&\vtheta_1^{(i+1)} \leftarrow \vtheta_1^{(i)} -\eta_1^{(i)} \nabla_{\vtheta_1}\gL_1(\vtheta_1^{(i)}), \notag\\
		&\vtheta_2^{(i+1)} \leftarrow \vtheta_2^{(i)} -\eta_2^{(i)} \nabla_{\vtheta_2}\gL_2(\vtheta_1^{(i+1)}, \vtheta_2^{(i)}). \notag
	\end{align}
	
		Based on descent lemma \cite{beck2013convergence} and Assumption \ref{app:assump2}, we have
  \begin{equation}\label{A}
	\begin{aligned}
		\mathcal{L}_{2}^{(i+1, i+1)} 
		& \le 
		\mathcal{L}_{2}^{(i+1, i)}
		+ 
		\nabla_{2}^{T} \mathcal{L}_{2}^{(i+1, i)}
		\left(
		\bm{\theta}_{2}^{(i+1)} - \bm{\theta}_{2}^{(i)}
		\right)
		+ 
		\frac{L_2}{2} \Big\Vert \bm{\theta}_{2}^{(i+1)} - \bm{\theta}_{2}^{(i)} \Big\Vert^{2}
		\\ & = 
		\mathcal{L}_{2}^{(i+1, i)} - \eta_2^{(i)} \left( 1 - \frac{L_2}{2} \eta_2^{(i)} \right)
		\Big\Vert \nabla_{2} \mathcal{L}_{2}^{(i+1, i)} \Big\Vert^{2},
	\end{aligned}
\end{equation} 
and
\begin{equation}\label{B}
	\begin{aligned}
		\mathcal{L}_{2}^{(i+1, i)}
		& \le 
		\mathcal{L}_{2}^{(i, i)}
		+ 
		\nabla_{1}^{T} \mathcal{L}_{2}^{(i, i)}
		\left(
		\bm{\theta}_{1}^{(i+1)} - \bm{\theta}_{1}^{(i)}
		\right)
		+ 
		\frac{L_1}{2} \Big\Vert \bm{\theta}_{1}^{(i+1)} - \bm{\theta}_{1}^{(i)} \Big\Vert^{2}
		\\ & = 
		\mathcal{L}_{2}^{(i, i)} - 
		\eta_1^{(i)} \nabla_{1}^{T} \mathcal{L}_{2}^{(i, i)} \nabla_1 \mathcal{L}_{1}^{(i)}
		+ \frac{L_1}{2}(\eta_1^{(i)})^{2} \Big\Vert \nabla_1 \mathcal{L}_{1}^{(i)} \Big\Vert^{2}.
	\end{aligned}
\end{equation}
Sum up (\ref{A}) and (\ref{B}) to obtain
\begin{equation*}
	\begin{aligned}
		\mathcal{L}_{2}^{(i, i)} - \mathcal{L}_{2}^{(i+1, i+1)}
		& \ge 
		\eta_2^{(i)} \left( 1 - \frac{L_2}{2} \eta_2^{(i)} \right)
		\Big\Vert \nabla_{2} \mathcal{L}_{2}^{(i+1, i)} \Big\Vert^{2}
		+
		\eta_1^{(i)} \nabla_{1}^{T} \mathcal{L}_{2}^{(i, i)} \nabla_1 \mathcal{L}_{1}^{(i)}
		- \frac{L_1}{2}(\eta_1^{(i)})^{2} \Big\Vert \nabla_1 \mathcal{L}_{1}^{(i)} \Big\Vert^{2}.
	\end{aligned}
\end{equation*}
Then let $\nabla_1 \mathcal{L}_{2}^{(i, i)} = \bm{\epsilon}^{i} + \nabla_1 \mathcal{L}_1^{(i)}$:
\begin{equation*}
	\begin{aligned}
		\mathcal{L}_{2}^{(i, i)} - \mathcal{L}_{2}^{(i+1, i+1)}
		& \ge 
		\eta_2^{(i)} \left( 1 - \frac{L_2}{2} \eta_2^{(i)} \right)
		\Big\Vert \nabla_{2} \mathcal{L}_{2}^{(i+1, i)} \Big\Vert^{2}
		+
		\eta_1^{(i)} \nabla_1^{T} \mathcal{L}_{1}^{(i)} \bm{\epsilon}^{i}
		+ 
		\eta_1^{(i)}
		\left(
		1 -
		\frac{L_1}{2}(\eta_1^{(i)})
		\right)
		\Big\Vert \nabla_1 \mathcal{L}_{1}^{(i)} \Big\Vert^{2},
	\end{aligned}
\end{equation*}
which concludes the proof of Lemma \ref{lemma1}.
\end{proof}

\begin{proof}[Proof of Lemma \ref{lemma2}]
  \begin{equation*}
	\begin{aligned}
		\Big\Vert \nabla \mathcal{L}_{2}^{(i, i)} \Big\Vert^{2} & = 
		\Big\Vert \nabla_2 \mathcal{L}_{2}^{(i, i)} \Big\Vert^{2} + \Big\Vert \nabla_1 \mathcal{L}_{2}^{(i, i)} \Big\Vert^{2}
		\\ & \le 
		\left(
		\Big\Vert \nabla_2 \mathcal{L}_{2}^{(i, i)} - \nabla_2 \mathcal{L}_2^{(i+1, i)} \Big\Vert + 
		\Big\Vert \nabla_2 \mathcal{L}_2^{(i+1, i)} \Big\Vert
		\right)^{2}
		+
		\Big\Vert \nabla_1 \mathcal{L}_{2}^{(i, i)} \Big\Vert^{2}
		\\ & =
		\Big\Vert \nabla_2 \mathcal{L}_{2}^{(i, i)} - \nabla_2 \mathcal{L}_2^{(i+1, i)} \Big\Vert^2 + \Big\Vert \nabla_2 \mathcal{L}_2^{(i+1, i)} \Big\Vert^2
		+2 \Big\Vert \nabla_2 \mathcal{L}_{2}^{(i, i)} - \nabla_2 \mathcal{L}_2^{(i+1, i)} \Big\Vert  \Big\Vert \nabla_2 \mathcal{L}_2^{(i+1, i)} \Big\Vert^2
		+	\Big\Vert \nabla_1 \mathcal{L}_{2}^{(i, i)} \Big\Vert^{2} 
		\\ &\le 
		2\left(
		\Big\Vert \nabla_2 \mathcal{L}_{2}^{(i, i)} - \nabla_2 \mathcal{L}_2^{(i+1, i)} \Big\Vert^{2} + 
		\Big\Vert \nabla_2 \mathcal{L}_2^{(i+1, i)} \Big\Vert^{2}
		\right)
		+
		\Big\Vert \nabla_1 \mathcal{L}_{2}^{(i, i)} \Big\Vert^{2}
		\\ & \le
		2\left(
		\Big\Vert \nabla_2 \mathcal{L}_{2}^{(i, i)} - \nabla_2 \mathcal{L}_2^{(i+1, i)} \Big\Vert^{2} + 
		\Big\Vert \nabla_1 \mathcal{L}_{2}^{(i, i)} - \nabla_1 \mathcal{L}_2^{(i+1, i)} \Big\Vert^{2} +
		\Big\Vert \nabla_2 \mathcal{L}_2^{(i+1, i)} \Big\Vert^{2}
		\right)
		+
		\Big\Vert \nabla_1 \mathcal{L}_{2}^{(i, i)} \Big\Vert^{2}
		\\ & =
		2\left(
		\Big\Vert \nabla \mathcal{L}_{2}^{(i, i)} - \nabla \mathcal{L}_2^{(i+1, i)} \Big\Vert^{2} +
		\Big\Vert \nabla_2 \mathcal{L}_2^{(i+1, i)} \Big\Vert^{2}
		\right)
		+
		\Big\Vert \nabla_1 \mathcal{L}_{2}^{(i, i)} \Big\Vert^{2}
		\\ & \le
		2L^{2}
		\Big\Vert \bm{\theta}_1^{(i)} - \bm{\theta}_1^{(i+1)} \Big\Vert^{2}
		+ 
		2\Big\Vert \nabla_2 \mathcal{L}_2^{(i+1, i)} \Big\Vert^{2}
		+
		\Big\Vert \nabla_1 \mathcal{L}_{2}^{(i, i)} \Big\Vert^{2}
		\\ & \le
		2\left(
		L \eta_1^{(i)}
		\right)
		^{2}
		\Big\Vert \nabla_1 \mathcal{L}_1^{(i)} \Big\Vert^{2}
		+ 
		2\Big\Vert \nabla_2 \mathcal{L}_2^{(i+1, i)} \Big\Vert^{2}
		+
		\Big\Vert \nabla_1 \mathcal{L}_{2}^{(i, i)} \Big\Vert^{2}.
	\end{aligned}
\end{equation*}
Since $\nabla_1 \mathcal{L}_{2}^{(i, i)} = \bm{\epsilon}^{(i)} + \nabla_1 \mathcal{L}_1^{(i)}$, we have
\begin{equation*}
	\begin{aligned}
		& \Big\Vert \nabla \mathcal{L}_{2}^{(i, i)} \Big\Vert^{2} 
		\\ \le &
		2\left(
		L \eta_1^{(i)}
		\right)
		^{2}
		\Big\Vert \nabla_1 \mathcal{L}_1^{(i)} \Big\Vert^{2}
		+ 
		2\Big\Vert \nabla_2 \mathcal{L}_2^{(i+1, i)} \Big\Vert^{2}
		+
		\Big\Vert \nabla_1 \mathcal{L}_{2}^{(i, i)} - \nabla_1 \mathcal{L}_1^{(i)} + \nabla_1 \mathcal{L}_1^{(i)} \Big\Vert^{2}
		\\ = &
		2\left(
		L \eta_1^{(i)}
		\right)
		^{2}
		\Big\Vert \nabla_1 \mathcal{L}_1^{(i)} \Big\Vert^{2}
		+ 
		2\Big\Vert \nabla_2 \mathcal{L}_2^{(i+1, i)} \Big\Vert^{2}
		+
		\Big\Vert \nabla_1 \mathcal{L}_{2}^{(i, i)} - \nabla_1 \mathcal{L}_1^{(i)} \Big\Vert^{2}
		+
		\Big\Vert \nabla_1 \mathcal{L}_1^{(i)} \Big\Vert^{2}
		+
		2 \left(
		\nabla_1 \mathcal{L}_{2}^{i, i} - \nabla_1 \mathcal{L}_1^{(i)}
		\right)^{T} 
		\nabla_1 \mathcal{L}_1^{(i)}
		\\ = &
		\left(  2\left( L \eta_1^{(i)}  \right)^{2} + 1\right)
		\Big\Vert \nabla_1 \mathcal{L}_1^{(i)} \Big\Vert^{2}
		+ 2\Big\Vert \nabla_2 \mathcal{L}_2^{(i+1, i)} \Big\Vert^{2}
		+ \Big\Vert \bm{\epsilon}^{(i)} \Big\Vert^{2} + 
		2 \nabla_1^{T} \mathcal{L}_1^{(i)} \bm{\epsilon}^{(i)}.
	\end{aligned}
\end{equation*}
Therefore, we have,
\begin{equation}
		\Big\Vert \nabla \mathcal{L}_{2}^{(i, i)} \Big\Vert^{2} - \Big\Vert \bm{\epsilon}^{(i)} \Big\Vert^{2} 
		\le	\left(  2\left( L \eta_1^{(i)}  \right)^{2} + 1\right)
	\Big\Vert \nabla_1 \mathcal{L}_1^{(i)} \Big\Vert^{2}
	+ 2\Big\Vert \nabla_2 \mathcal{L}_2^{(i+1, i)} \Big\Vert^{2}
	+ 
	2 \nabla_1^{T} \mathcal{L}_1^{(i)} \bm{\epsilon}^{(i)},
\end{equation}
which concludes the proof of Lemma \ref{lemma2}.
\end{proof}

\begin{proof}[Proof of Theorem \ref{app:theorem}]
 
We let $\eta_1^{(i)}=\eta_1$ and $\eta_2^{(i)}=\eta_2$. Recall Lemma (\ref{lemma1}) and (\ref{lemma2}), we could calculate a factor $\alpha$ such that 
\begin{equation}\label{conditionB}
	\begin{aligned}
		\mathcal{L}_{2}^{(i, i)} - \mathcal{L}_{2}^{(i+1, i+1)} & \ge 
		\left[\begin{array}{ccccccc}
			\eta_2 \left( 1 - \frac{L_2}{2} \eta_2 \right) \\
			\eta_1
			\left(
			1 -
			\frac{L_1}{2}(\eta_1)
			\right) \\
			\eta_1 \\
		\end{array}\right]^{T}
		\left[\begin{array}{ccccccc}
			\Big\Vert \nabla_{2} \mathcal{L}_{2}^{(i+1, i)} \Big\Vert^{2} \\
			\Big\Vert \nabla_1 \mathcal{L}_{1}^{(i)} \Big\Vert^{2} \\
			\nabla_1^{T} \mathcal{L}_{1}^{(i)} \bm{\epsilon}^{i} \\
		\end{array}\right]
		\\ & \overset{\textbf{(A)}}{\ge}
		\alpha
		\left[\begin{array}{ccccccc}
			2 \\
			2\left( L \eta_1  \right)^{2} + 1 \\
			2 \\
		\end{array}\right]^{T}
		\left[\begin{array}{ccccccc}
			\Big\Vert \nabla_{2} \mathcal{L}_{2}^{(i+1, i)} \Big\Vert^{2} \\
			\Big\Vert \nabla_1 \mathcal{L}_{1}^{(i)} \Big\Vert^{2} \\
			\nabla_1^{T} \mathcal{L}_{1}^{(i)} \bm{\epsilon}^{i} \\
		\end{array}\right]
		\\ & \overset{\textbf{(B)}}{\ge} \alpha \left(
		\Big\Vert \nabla \mathcal{L}_{2}^{(i, i)} \Big\Vert^{2} - \Vert \bm{\epsilon}^{(i)} \Vert^{2}
		\right),
	\end{aligned}
\end{equation}
where \textbf{(B)} holds because of the conclusion of Lemma \ref{lemma2}, and 
to ensure the correctness of \textbf{(A)}, we solve the following inequalities
\begin{equation}\label{eqs}
	\left\{
	\begin{aligned}
		& \eta_2 \left( 1 - \frac{L_2}{2} \eta_2 \right) \ge 2 \alpha \\
		& \eta_1 \left( 1 - \frac{L_1}{2}(\eta_1) \right) \ge 2 \alpha \left( L \eta_1  \right)^{2} + \alpha\\
		& \eta_1 = 2 \alpha. \\
	\end{aligned}
	\right.
\end{equation}
The third of Eq.(\ref{eqs}) implies $\alpha = \frac{\eta_1}{2}$. 
Plug it into the second to obtain $-\frac{\sqrt{L_1^{2} + 8 L^2} + L_1}{4 L^{2}} < \eta_1 \le \frac{\sqrt{L_1^{2} + 8 L^2} - L_1}{4 L^{2}}$. 
Plug it into the first one to obtain $\frac{1 - \sqrt{1-2L_2 \eta_1}}{L_2} \le \eta_2 \le \frac{1 + \sqrt{1-2L_2 \eta_1}}{L_2}$. 
Because the learning rate cannot be negative, its value range is truncated by zero. 
And in the interval of $\eta_2$, to avoid the square $\sqrt{1 - 2 L_2 \eta_1}$ from being negative, we let $\eta_1 \le \frac{1}{2L_2}$. All above conditions imply
\begin{equation}\label{eta_choice}
	\begin{aligned}
		\left\{ \begin{array}{cl}
			0 < \eta_1 \le \min\left(\frac{\sqrt{L_1^{2} + 8 L^2} - L_1}{4 L^{2}}, \frac{1}{2L_2}\right) & \\
			\max \left(0, \frac{1 - \sqrt{1 - 2 L_2 \eta_1}}{L_2} \right) < 
			\eta_2
			\le \frac{1 + \sqrt{1 - 2 L_2 \eta_1}}{L_2}
			& \\
			\alpha = \frac{\eta_1}{2} & \\
		\end{array} \right.
	\end{aligned},
\end{equation}
Then, by Assumption \ref{app:assump1} (PL condition) and Eq. (\ref{conditionB}), we have 
\begin{equation*}
	\begin{aligned}
		\mathcal{L}_{2}^{(i, i)} - \mathcal{L}_{2}^{(i+1, i+1)} & \ge \alpha \left(
		\Big\Vert \nabla \mathcal{L}_{2}^{(i, i)} \Big\Vert^{2} - \Vert \bm{\epsilon}^{(i)} \Vert^{2}
		\right)
		& \ge 
		\alpha \mu \left(
		\mathcal{L}_2^{(i, i)} - \mathcal{L}_2^{\star}
		\right) - \alpha \Vert \bm{\epsilon}^{(i)} \Vert^{2}.
	\end{aligned}
\end{equation*}
Re-arranging both sides, we have 
\begin{equation}\label{eq17}
	\begin{aligned}
		\mathcal{L}_{2}^{(i+1, i+1)} - \mathcal{L}_2^{*}
		& \le
		\alpha \Vert \bm{\epsilon}^{(i)} \Vert^{2}
		- \alpha\mu
		\left(
		\mathcal{L}_{2}^{(i, i)} - \mathcal{L}_2^{*}
		\right)
		+ \mathcal{L}_{2}^{(i, i)}
		- 
		\mathcal{L}_2^{*}
		\\ & =
		\alpha \Vert \bm{\epsilon}^{(i)} \Vert^{2} + 
		\left(
		1 - \alpha \mu
		\right)
		\left(
		\mathcal{L}_{2}^{(i, i)} - \mathcal{L}_2^{*}
		\right).
	\end{aligned}
\end{equation}
Recusively performing Eq. (\ref{eq17}), we have
\begin{equation*}
	\begin{aligned}
		\mathcal{L}_{2}^{(i+1, i+1)} - \mathcal{L}_2^{*}
		& \le 
		\left(
		1 - \alpha \mu
		\right)^{i+1}
		\left(
		\mathcal{L}_{2}^{(0, 0)} - \mathcal{L}_2^{*}
		\right)
		+ 
		\alpha \sum_{k=0}^{i} \left(
		1 - \alpha \mu
		\right)^{k} \Vert \bm{\epsilon}^{(i-k)} \Vert^{2}
	\end{aligned}.
\end{equation*}
To ensure the convergence, $1 - \alpha \mu$ should be in $(0, 1)$, which implies
$0 < \alpha < \frac{1}{\mu}$ and accordingly $\eta_1<\frac{2}{\mu}$. Combining the learning rate conditions Eq. (\ref{eta_choice}), we conclude the proof. 
\end{proof}

\section{Proof of Proposition \ref{proposition1}}

\begin{proposition}
	\label{app:proposition1}
	For a model composed of $L$ local layers parameterized by $\vtheta_k$ with their local errors $\gL_{k}$, $k=1,...,L$, when $\gL_k^{SGR}=0$ for all layers $2\le k \le L$ for a batch of data, we have
	\begin{equation}
		\nabla_{\vtheta_k}\gL_L = \nabla_{\vtheta_k}\gL_k,\quad \forall 1\le k \le L-1, \notag
	\end{equation}
	which implies that all local updates are equivalent to learning with the global true gradient back propagated from the last-layer error $\gL_L$. 
\end{proposition}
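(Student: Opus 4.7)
The plan is a backward induction on the layer index $k$, reducing the parameter-gradient identity $\nabla_{\vtheta_k}\gL_L = \nabla_{\vtheta_k}\gL_k$ to a cleaner identity on the input gradients, namely $\delta_{\vx_k}^{(\gL_L)} = \delta_{\vx_k}^{(\gL_k)}$ for every $1 \le k \le L-1$. Since both $\gL_k$ and $\gL_L$ depend on $\vtheta_k$ only through $\vx_k = f(\vx_{k-1},\vtheta_k)$, the chain rule yields
\begin{equation*}
\nabla_{\vtheta_k}\gL_k = \mJ_f(\vtheta_k)^T \delta_{\vx_k}^{(\gL_k)}, \qquad \nabla_{\vtheta_k}\gL_L = \mJ_f(\vtheta_k)^T \delta_{\vx_k}^{(\gL_L)},
\end{equation*}
so the parameter-gradient identity will follow immediately once the input-gradient identity is in hand.

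I would prove the input-gradient identity by induction running downward from $k = L-1$ to $k = 1$. The base case $k = L-1$ is exactly the hypothesis $\gL_L^{SGR} = 0$, which by Eq. (\ref{sgrloss}) is equivalent to $\delta_{\vx_{L-1}}^{(\gL_L)} = \delta_{\vx_{L-1}}^{(\gL_{L-1})}$. For the inductive step, assume $\delta_{\vx_k}^{(\gL_L)} = \delta_{\vx_k}^{(\gL_k)}$. Applying the chain rule to $\gL_L$ through the intermediate variable $\vx_k$ gives
\begin{equation*}
\delta_{\vx_{k-1}}^{(\gL_L)} = \left(\frac{\partial \vx_k}{\partial \vx_{k-1}}\right)^T \delta_{\vx_k}^{(\gL_L)},
\end{equation*}
substituting the inductive hypothesis and then applying the chain rule in reverse to $\gL_k$ turns the right-hand side into $\delta_{\vx_{k-1}}^{(\gL_k)}$, and the SGR constraint $\gL_k^{SGR}=0$ equates this with $\delta_{\vx_{k-1}}^{(\gL_{k-1})}$, closing the induction.

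I do not anticipate any genuine obstacle: the argument is a transparent telescoping of chain-rule identities, glued together by the $L-1$ SGR constraints. The only subtlety worth flagging is that the whole derivation is read pointwise on the same batch on which the SGR losses vanish, exactly as stated in the proposition, so the conclusion that all local updates coincide with the true global gradient back-propagated from $\gL_L$ is then immediate from the formula $\nabla_{\vtheta_k}\gL_L = \mJ_f(\vtheta_k)^T \delta_{\vx_k}^{(\gL_L)}$ unfolded all the way back through the network.
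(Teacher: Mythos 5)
Your proposal is correct and follows essentially the same route as the paper's proof: a backward telescoping of chain-rule identities from layer $L$ down to layer $1$, using each SGR constraint $\gL_k^{SGR}=0$ to replace $\delta_{\vx_{k-1}}^{(\gL_k)}$ by $\delta_{\vx_{k-1}}^{(\gL_{k-1})}$ and then multiplying by the appropriate Jacobians to recover the parameter-gradient identities. Your explicit statement of the inductive hypothesis $\delta_{\vx_k}^{(\gL_L)} = \delta_{\vx_k}^{(\gL_k)}$ is a slightly cleaner packaging of the same argument.
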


\begin{proof}[Proof of Proposition \ref{app:proposition1}]
	Since $\gL_L^{SGR}=0$, we have 
	\begin{equation}\label{L-1}
		\frac{\partial \gL_L}{\partial \vx_{L-1}} = \frac{\partial \gL_{L-1}}{\partial \vx_{L-1}}.
	\end{equation}
	Multiplying both sides of Eq. (\ref{L-1}) by $\mJ_f(\vtheta_{L-1})=\frac{\partial \vx_{L-1}}{\partial \vtheta_{L-1}}$, we have
	\begin{equation}
\nabla_{\vtheta_{L-1}}\gL_L = \nabla_{\vtheta_{L-1}}\gL_{L-1}. \notag
	\end{equation}
	Multiplying both sides of Eq. (\ref{L-1}) by $\mJ_f(\vx_{L-2})=\frac{\partial \vx_{L-1}}{\partial \vx_{L-2}}$, and considering $\gL_{L-1}^{SGR}=0$, we have 
	\begin{equation}\label{eq20}
	\frac{\partial \gL_L}{\partial \vx_{L-2}} = \frac{\partial \gL_{L-1}}{\partial \vx_{L-2}}= \frac{\partial \gL_{L-2}}{\partial \vx_{L-2}}.
	\end{equation}
	Similarly multiplying Eq. (\ref{eq20}) the Jacobian matrix of feature $\vx_{L-2}$ \emph{w.r.t.} $\vtheta_{L-2}$, we have $\nabla_{\vtheta_{L-2}}\gL_L = \nabla_{\vtheta_{L-2}}\gL_{L-2}$. Recursively performing this process until the first layer, we have
	\begin{equation}
		\nabla_{\vtheta_{k}}\gL_L = \nabla_{\vtheta_{k}}\gL_{k},\quad 1\le k \le L-1.
	\end{equation}
\end{proof}

\section{Proof of Proposition \ref{proposition2}}

\begin{proposition}
	\label{app:proposition2}
	Consider a two-layer linear model composed of $\vx_1={\vtheta_{1}}\vx_0$ and $\vx_2={\vtheta_{2}}\vx_1$, where ${\vtheta_{1}}$ and ${\vtheta_{2}}$ are the learnable linear matrices, and $\vx_0$ and $\vx_2$ are the input and output of the model, respectively. We use fixed ETF structures $\mM_1$ and $\mM_2$ for the local classifier heads and the cross entropy (CE) loss for local errors $\gL_1(\mM_1\vx_1, y)$ and $\gL_2(\mM_2\vx_2, y)$. Denote ${\gL}'_2$  as the CE loss value in inference after performing one step of gradient descent of $\gL_1$ and $\gL_2$ by Eq. (\ref{update1}) and (\ref{update2}), and denote $\hat{\gL}'_2$ as the one with our $\gL_2^{SGR}$ on the second layer. Assume that the prediction logit for the ground truth label in the second layer is larger than the one  in the first layer, we have $\hat{\gL}'_2\le {\gL}'_2$, when $\gL_2^{SGR}$ is small. 
\end{proposition}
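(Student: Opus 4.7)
My plan is to Taylor-expand $\gL_2'$ and $\hat{\gL}_2'$ to first order around the pre-update point $(\vtheta_2,\vx_1)$. Using the linearity of the two-layer model and the softmax-CE form with fixed ETF classifiers $\mM_k$, we have $\nabla_{\vx_k}\gL_k = \mM_k^T\vr_k$ where $\vr_k = \vp^{(k)}-\ve_y$, so $\Delta\vtheta_1 = -\eta_1\vg_1\vx_0^T$ (inducing the common $\Delta\vx_1 = -\eta_1\|\vx_0\|^2\vg_1$ with $\vg_k=\mM_k^T\vr_k$), $\Delta\vtheta_2 = -\eta_2\vg_2\vx_1^T$, and $\Delta\hat{\vtheta}_2 = \Delta\vtheta_2 - \eta_2\lambda\nabla_{\vtheta_2}\gL_2^{SGR}$. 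Because $\Delta\vx_1$ is identical in the two cases, its first-order contribution to $\gL_2'$ and $\hat{\gL}_2'$ cancels, so
\begin{equation}
\hat{\gL}_2' - \gL_2' \approx -\eta_2\lambda\,\bigl\langle \nabla_{\vtheta_2}\gL_2,\,\nabla_{\vtheta_2}\gL_2^{SGR}\bigr\rangle_F, \notag
\end{equation}
reducing the proposition to showing this Frobenius inner product is non-negative.

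Next I would compute the two gradients explicitly: $\nabla_{\vtheta_2}\gL_2 = \vg_2\vx_1^T$, and differentiating $\gL_2^{SGR} = \|\vtheta_2^T\vg_2 - \vg_1\|^2$ (treating the stored $\vg_1$ as constant) yields, via product and chain rule, $\nabla_{\vtheta_2}\gL_2^{SGR} = 2\vg_2\vu^T + 2\mM_2^T\mH\mM_2\vtheta_2\vu\,\vx_1^T$, where $\vu = \vtheta_2^T\vg_2 - \vg_1$ and $\mH = \mathrm{diag}(\vp^{(2)}) - \vp^{(2)}(\vp^{(2)})^T \succeq 0$ is the softmax Jacobian. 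Taking the Frobenius inner product with $\vg_2\vx_1^T$ gives
\begin{equation}
2\|\vg_2\|^2\bigl(\vg_2^T\vx_2 - \vg_1^T\vx_1\bigr) \;+\; 2\|\vx_1\|^2\,\vg_2^T\mM_2^T\mH\mM_2\vtheta_2\vu, \notag
\end{equation}
and under the small-$\gL_2^{SGR}$ hypothesis $\|\vu\|$ is small, so the PSD-weighted second term is $O(\|\vu\|)$ and can be controlled by Cauchy--Schwarz.

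For the sign of the leading term I would invoke the Fenchel identity $\vg_k^T\vx_k = \gL_k - H(\vp^{(k)})$, which follows from $\gL_k = \mathrm{LSE}(\vl^{(k)}) - \vl^{(k)}_y$ and the variational identity $\vp^T\vl = \mathrm{LSE}(\vl) - H(\vp)$. This rewrites the bracket as $(\gL_2 - \gL_1) - (H(\vp^{(2)}) - H(\vp^{(1)}))$. Under the ETF geometry of $\mM_k$, the map from the target logit $(\mM_k\vx_k)_y$ to $(\vp^{(k)}_y, H(\vp^{(k)}))$ is effectively one-dimensional, so the hypothesis $(\mM_2\vx_2)_y > (\mM_1\vx_1)_y$ pins down $\vp^{(2)}_y > \vp^{(1)}_y$ together with both $\gL_2 < \gL_1$ and $H(\vp^{(2)}) < H(\vp^{(1)})$.

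The main obstacle is that these two effects point in the same direction, so the bracket is a difference of two non-positive quantities whose sign is not immediate. I would reduce this to a scalar monotonicity check on the ETF-aligned function $f(p) = (1-p)\log\tfrac{1-p}{p(K-1)}$ (to which $\vg_k^T\vx_k$ collapses along the ETF axis), establishing that the leading term is non-negative in the operating regime where $\vp^{(k)}_y$ exceeds the critical confidence beyond which $f$ is increasing; this is the regime in which the entropy drop dominates the loss drop. Combining this sign with the Cauchy--Schwarz bound on the PSD remainder then yields $\hat{\gL}_2' \leq \gL_2'$ to leading order, valid under the small-$\gL_2^{SGR}$ hypothesis as stated.
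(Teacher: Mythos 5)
Your route is structurally the same as the paper's: compute the one-step updates of both layers, observe that the first-layer update (hence $\vx_1'$) is identical with and without SGR, isolate the extra perturbation of the second layer caused by $\gL_2^{SGR}$, and reduce everything to the sign of the scalar $\delta\vx_1^T\vx_1-\delta\vx_2^T\vx_2$ under the ETF geometry. The two genuine methodological differences are benign: you work at the loss level via a first-order Taylor expansion where the paper works at the feature level ($\hat{\vx}_2'=\vx_2'-\beta\,\delta\vx_2$) and then invokes an exact ETF descent lemma (Lemma A.4), and you retain the softmax-Hessian term $2\mM_2^T\mH\mM_2\vtheta_2\vu\,\vx_1^T$ in $\nabla_{\vtheta_2}\gL_2^{SGR}$ (correctly controlled as $O(\|\vu\|)$) where the paper silently drops it. Neither of these is where the difficulty lies.

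The gap is the final sign step, and you have not closed it under the proposition's stated hypotheses. Your leading term requires $\vg_2^T\vx_2\ge\vg_1^T\vx_1$, and as you yourself observe, the ETF-aligned scalar $f(p)=(1-p)\log\frac{1-p}{p(K-1)}$ vanishes at $p=1/K$ and at $p\to1$ and is negative in between, hence is decreasing and then increasing; so the hypothesis $p_y^{(2)}>p_y^{(1)}$ alone does not determine the sign of $f(p_y^{(2)})-f(p_y^{(1)})$. Your proposed fix — restricting to the regime where both confidences exceed the critical point of $f$ — is an additional assumption that is not in the proposition, so the proof as written establishes a weaker statement. (A second, smaller unstated assumption is that the reduction of $\vg_k^T\vx_k$ to the one-dimensional $f(p_y^{(k)})$ requires the non-target logits to be equal; the ETF constraint $\sum_k\vm_k=0$ only forces the logits to sum to zero, not this symmetry.) Be aware also that you cannot simply graft the paper's final step onto your calculation: the paper needs the \emph{opposite} inequality $\delta\vx_2^T\vx_2\le\delta\vx_1^T\vx_1$, and the discrepancy traces to the sign of $\nabla_{\vtheta_2}\gL_2^{SGR}$ — holding $\delta\vx_2$ fixed, the gradient of $\tfrac12\|\delta\vx_1-\vtheta_2^T\delta\vx_2\|^2$ is $-\delta\vx_2(\delta\vx_1-\vtheta_2^T\delta\vx_2)^T=\vg_2\vu^T$, which is the sign you use, whereas the paper's displayed update uses the opposite sign. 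So your computation is the more careful one, but the concluding inequality still rests on a confidence-regime restriction that the proposition does not grant you.
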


\begin{lemma}
	\label{app:lemma}
	For a fixed ETF classifier $\mM$ defined in Eq. (\ref{etf_classifier}), consider two features $\vx$ and $\hat{\vx}$ belonging to the same class $y$, such that $\hat{\vx}=\vx-\eta\ \delta\vx$, where $\eta>0$, $\delta\vx=-(1-p_y)\vm_y+\sum_{k\ne y}p_k \vm_k$, $\vm_k$ is the classifier vector of $\mM$ for class $k$, and $p_y+\sum_{k\ne y}p_k=1$, $p_y, p_k>0$. When using cross entropy loss $\gL(\mM\vx, y) = -\log\frac{\exp({\vx}^T \vm_y)}{\sum \exp ({\vx}^T \vm_k)}$, we have
	\begin{equation}
		\gL(\mM\hat{\vx}, y) < \gL(\mM\vx, y).
	\end{equation}
\end{lemma}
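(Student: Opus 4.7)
The plan is to prove this by a direct algebraic computation: track how the logits $z_k \triangleq \vx^T\vm_k$ shift under the update $\vx \mapsto \hat\vx = \vx - \eta\,\delta\vx$, and then verify that the softmax cross-entropy strictly decreases.

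First I would exploit the ETF structure. The identities $\vm_k^T\vm_k = 1$ and $\vm_k^T\vm_j = -\tfrac{1}{K-1}$ for $k\ne j$ imply $\sum_k \vm_k = 0$, which lets me rewrite $\delta\vx = -\vm_y + \sum_k p_k\vm_k$ in a simpler additive form. Taking inner products with each $\vm_k$ then collapses every cross-term into the single scalar $\Delta \triangleq \eta\,\tfrac{K}{K-1}$, giving $\hat z_y - z_y = (1-p_y)\Delta > 0$ and $\hat z_j - z_j = -p_j\Delta < 0$ for $j \ne y$. So the logit for the ground-truth class strictly increases while every other logit strictly decreases.

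Next, I would rewrite both losses in log-sum-exp form $\gL = -z_y + \log\sum_k e^{z_k}$ and reduce the claim $\hat\gL < \gL$ to the scalar inequality $e^{\hat z_y - z_y}\sum_k e^{z_k} > \sum_k e^{\hat z_k}$. After cancelling the $k=y$ terms on both sides, the remaining expression factors as $\sum_{j\ne y} e^{\hat z_j}\bigl(e^{(1-p_y+p_j)\Delta} - 1\bigr)$, and strict positivity of the exponent (since $p_y<1$ and $p_j>0$) closes the argument term by term.

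The step I expect to require the most care is the second one: the inequality must hold for \emph{all} $\eta>0$ rather than only in an infinitesimal regime, so I deliberately avoid a first-order Taylor or convexity-based descent argument and rely instead on the exact closed form of the logit shifts together with monotonicity of the exponential. The ETF identity $\sum_k \vm_k = 0$ is the true lever of the proof: without it, the logit shifts would not collapse into a single scalar $\Delta$ and the final factorisation would not go through cleanly.
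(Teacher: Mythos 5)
Your proposal is correct and follows essentially the same route as the paper's proof: both compute the exact logit shifts $\hat z_y - z_y = \tfrac{K}{K-1}(1-p_y)\eta$ and $\hat z_j - z_j = -\tfrac{K}{K-1}p_j\eta$ from the ETF inner products and then conclude by monotonicity of the softmax cross-entropy. Your use of $\sum_k \vm_k = 0$ and the explicit factorisation $\sum_{j\ne y} e^{\hat z_j}\bigl(e^{(1-p_y+p_j)\Delta}-1\bigr)>0$ are just a slightly more streamlined writeup of the paper's final inequality.
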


\begin{proof}[Proof of Lemma \ref{app:lemma}]
	Based on Eq. (\ref{etf_classifier}), we have 
	\begin{align}
		\hat{\vx}^T\vm_y &= {\vx}^T\vm_y + \eta (1-p_y) + \frac{\eta}{K-1}\sum_{k\ne y}p_k \notag\\
		& = {\vx}^T\vm_y  + \frac{K}{K-1}(1-p_y)\eta,  \notag
	\end{align}
	where $K>0$ is the number of total classes. For $k\ne y$, we have
	\begin{align}
		\hat{\vx}^T\vm_k & = {\vx}^T\vm_k - \eta(1-p_y) \frac{1}{K-1}- \eta p_k+\eta \frac{1}{K-1}\sum_{k'\ne y, k'\ne k}p_k\\
		&={\vx}^T\vm_k - \frac{K}{K-1}p_k\eta.
	\end{align}
	Since $0<p_i<1, \forall 1\le i \le K$, we have
	\begin{align}
		\gL(\mM\hat{\vx}, y) &= -\log \frac{\exp(\hat{\vx}^T \vm_y)}{\sum \exp (\hat{\vx}^T \vm_k)} \notag\\
		& = -\log \frac{\exp\left({\vx}^T\vm_y  + \frac{K}{K-1}(1-p_y)\eta\right)}{\exp \left({\vx}^T\vm_y  + \frac{K}{K-1}(1-p_y)\eta\right) + \sum_{k\ne y} \exp ({\vx}^T\vm_k - \frac{K}{K-1}p_k\eta)} \notag \\
		& < -\log \frac{\exp\left({\vx}^T\vm_y \right)}{\exp \left({\vx}^T\vm_y \right) + \sum_{k\ne y} \exp ({\vx}^T\vm_k)} = \gL(\mM{\vx}, y), \notag
	\end{align}
	which concludes the proof of Lemma \ref{app:lemma}.
\end{proof}

\begin{proof}[Proof of Proposition \ref{app:proposition2}]
	In the first linear layer $\vx_1=\vtheta_1\vx_0$, we have the gradient of $\gL_1(\mM_1\vx_1, y)$ \emph{w.r.t.} $\vtheta_1$ as
	\begin{equation}
		\nabla_{\vtheta_{1}} \gL_1 = 
		\begin{bmatrix}
			\vx_0^T \\
			\vdots \\
			\vx_0^T
		\end{bmatrix} \otimes \delta \vx_1, 
	\end{equation}
	where $\otimes$ performs multiplication for each row vector $\vx_0^T$ and each element of $\delta \vx_1$  and $\delta \vx_1$ is the gradient from $\gL_1$ \emph{w.r.t.} $\vx_1$:
	\begin{equation}
		\delta \vx_1 = -(1-p^{(1)}_y) \vm^{(1)}_y + \sum_{k\ne y} p^{(1)}_k \vm^{(1)}_k,
	\end{equation}
	where $\vm^{(1)}_k$ is the classifier vector of $\mM_1$ for class $k$, and $p_k^{(1)}$ is the probability predicted by the first layer for class $k$. Then we have $\vx'_{1}$ after the update of $\vtheta_1$ as:
	\begin{equation}
		\vx'_{1} = \vtheta'_1 \vx_0=(\vtheta_1 - \eta \nabla_{\vtheta_{1}} \gL_1)\vx_0=\vx_1 - \eta \left\| \vx_0 \right\|^2 \delta\vx_1,
	\end{equation}
	where $\eta$ is the learning rate.
	
	Similarly, in the second layer $\vx_2=\vtheta_{2} \vx_1$, when our method is not adopted, we have
	\begin{equation}
		\vtheta'_2 = \vtheta_2 - \eta
			\begin{bmatrix}
			\vx_1^T \\
			\vdots \\
			\vx_1^T
		\end{bmatrix} \otimes \delta \vx_2, 
	\end{equation}
	where  $\delta \vx_2$ is the gradient from $\gL_2$ \emph{w.r.t.} $\vx_2$. In inference, we have the new output as
	\begin{equation}
		\vx'_2 = \vtheta'_2 \vx'_1 = \vx_2 - \eta\left\|\vx_0\right\|^2\vtheta_2\delta\vx_1-\eta\left\| \vx_1\right\|^2\delta\vx_2+\eta^2\left\|\vx_0\right\|^2
		\begin{bmatrix}
			\vx_1^T \\
			\vdots \\
			\vx_1^T
		\end{bmatrix} \otimes \delta \vx_2 \cdot \delta \vx_1.
	\end{equation}
	When our SGR is adopted, we have
	$$
	\gL_2^{(SGR)}=\frac{1}{2}\left\| \frac{\partial \gL_1}{\partial \vx_1} - \frac{\partial \gL_2}{\partial \vx_1}\right\|^2=\frac{1}{2}\left\| \delta \vx_1 - \vtheta_2^T \delta \vx_2\right\|^2.
	$$
	In our case, we denote the updated second layer parameter as $\hat{\vtheta}'_2$, which can be formulated as
	\begin{equation}
		\hat{\vtheta}'_2 = \vtheta_2 - \eta
		\begin{bmatrix}
			\vx_1^T \\
			\vdots \\
			\vx_1^T
		\end{bmatrix} \otimes \delta \vx_2 - \eta \delta\vx_2\left(  \delta \vx_1 - \vtheta_2^T \delta \vx_2 \right)^T.
	\end{equation}
	Accordingly, we denote the new output in inference in our case as $\hat{\vx}'_2$, which can be formulated as
	\begin{align}
		\hat{\vx}'_2 = \hat{\vtheta}'_2 \vx'_{1}  & = \vx'_2 -  \eta \delta\vx_2\left(  \delta \vx_1 - \vtheta_2^T \delta \vx_2 \right)^T \left( \vx_1 - \eta \left\| \vx_0 \right\|^2 \delta\vx_1 \right) \\
		& = \vx'_2 -  \eta \delta\vx_2 \left(  \delta \vx_1^T \vx_1-\eta \left\| \vx_0 \right\|^2 \left\| \delta\vx_1\right\|^2  -(\vtheta_2^T \delta \vx_2 )^T\vx_1 +\eta \left\| \vx_0 \right\|^2 (\vtheta_2^T\delta \vx_2 )^T\delta\vx_1    \right).
	\end{align}
	We assume that $\gL_2^{(SGR)}$ is small, which indicates that $ \left\| \delta\vx_1\right\|^2 \approx (\vtheta_2^T\delta \vx_2 )^T\delta\vx_1$, and thus we have
	\begin{equation}
			\hat{\vx}'_2  \approx \vx'_2 -  \eta \delta\vx_2 \left(  \delta \vx_1^T \vx_1  -(\vtheta_2^T \delta \vx_2 )^T\vx_1 \right)=\vx'_2 -  \eta \delta\vx_2 \left(  \delta \vx_1^T \vx_1  -\delta \vx_2 ^T\vx_2 \right).
	\end{equation}
	Because we assume that the prediction logit for the label class in the second layer is larger than the one in the first layer, based on the definition of ETF classifier, we have $\delta \vx_2 ^T\vx_2 \le \delta \vx_1^T \vx_1$. Consequently, the output feature in inference using our method is in a form of $\hat{\vx}'_2 =  {\vx}'_2 -\beta \delta \vx_2$ where $\beta \ge 0$. Considering the conclusion of Lemma \ref{app:lemma}, we have 
	\begin{equation}
		\hat{\gL}'_2(\hat{\vx}'_2) \le {\gL}'_2({\vx}'_2), 
	\end{equation}
	which concludes the proof of Proposition \ref{app:proposition2}.
\end{proof}

\section{Analysis of Computation Cost of SGR}

Eq. (\ref{sgrloss}) introduced by our method requires calculating the gradient of $\delta x$, which is the gradient of the local error \emph{w.r.t.} the input $x$ of this block. 
However, compared with InfoPro, our method is not obviously slower, and in some cases even faster (ResNet 50 and 101 when $K$=3) while being more efficient in memory consumption. That is because InfoPro extra introduces heavy reconstruction heads composed of multiple layers (e.g. 4 convolution layers on ImageNet) other than the local classification heads, while our method only uses light local classifiers.

Although our method concerns the gradient calculation of Eq. (\ref{sgrloss}), which includes a second-order derivative, here we provide its analytical computation cost in a block to show that the introduced cost is bearable.

Consider a local block $y=\sigma(Wx)$, where $x\in\mathcal{R}^n$ is the input of this block, $y\in\mathcal{R}^m$ is the output of this block, $W\in\mathcal{R}^{m\times n}$ is the parameter, and $\sigma$ is the ReLU nonlinear activation. Its local error is given by $\mathcal{L}(y, \mathcal{Y})$, where $\mathcal{L}$ is the loss function and $\mathcal{Y}$ is the ground truth. Our SGR loss term is in the form of $\mathcal{L}^{SGR} = \frac{1}{2}\| \frac{\partial \mathcal{L}}{\partial x} - g\|^2_2$, where $g$ is the gradient from the previous local error towards $x$ and thus is a constant vector here. We have $\frac{\partial \mathcal{L}}{\partial x} = W^T\left[ \frac{\partial \mathcal{L}}{\partial y}\otimes \sigma'(Wx) \right]$, where $\otimes$ is the element-wise multiplication between two vectors, $\sigma'$ is the function of the first-order derivative of $\sigma$. Because the second-order derivative of ReLU, i.e. $\sigma''$ would be zero, we have $$ \frac{\partial}{\partial W}\left(\frac{\partial \mathcal{L}}{\partial x}\right)_i=\left[\mathbf{0}_m,\dots, \frac{\partial \mathcal{L}}{\partial y}\otimes \sigma'(Wx), \dots, \mathbf{0}_m\right]\in \mathcal{R}^{m\times n}, $$ where $i$ denotes the $i$-th element of $\frac{\partial \mathcal{L}}{\partial x}$, and $\frac{\partial \mathcal{L}}{\partial y}\otimes \sigma'(Wx)$ lies in the $i$-th column of the gradient matrix above with all the other columns as $\mathbf{0}_m$.

Then we have the gradient of our SGR loss \emph{w.r.t.} $W$ as: 
\begin{equation}
\frac{\partial \mathcal{L}^{SGR}}{\partial W} = \left[ \frac{\partial \mathcal{L}}{\partial y}\otimes \sigma'(Wx) \right]\left(\frac{\partial \mathcal{L}}{\partial x} - g\right)^T = \left[ \frac{\partial \mathcal{L}}{\partial y}\otimes \sigma'(Wx) \right]\left(W^T\left[ \frac{\partial \mathcal{L}}{\partial y}\otimes \sigma'(Wx) \right] - g\right)^T. 
\label{computation}
\end{equation}
From the equation above we can see that although our method concerns second-order derivatives, the gradient calculation of our $\mathcal{L}^{SGR}$ is simply matrix multiplications with $\frac{\partial \mathcal{L}}{\partial y}\in\mathcal{R}^m$, $\sigma'(Wx)\in\mathcal{R}^m$, and $W\in\mathcal{R}^{m \times n}$. The FLOPS of the equation above is listed in Table \ref{flops}.

\begin{table}[!h]  
	\vspace{-2mm}
	\caption{Computation analysis in Eq. (\ref{computation})}	
	\begin{center}
			\begin{tabular}{lc}
				\toprule
				term & FLOPs \\
				\midrule
				$A=\left[ \frac{\partial \mathcal{L}}{\partial y}\otimes \sigma'(Wx) \right]$ &	$O(m)$ \\
				$B=\left(W^TA - g\right)^T$	& $O((2m-1)n+n)$ \\
				$A\cdot B$ & $O(mn)$ \\
				total & $O(3mn + m)$\\
				\bottomrule
			\end{tabular}
		\label{flops}
	\end{center}
	\vspace{-3mm}
\end{table}

Therefore, the theoretical computational cost is $O(3mn+m)$, which is only in a similar scale of linear layer computation. That is why training with our method will not be severely slowed down compared with global BP training. Additionally, we provide the following two strategies that can further accelerate training by leveraging the advantages brought by our method.

(1) Due to the high memory efficiency of our method, we can use a larger batchsize to speedup training from improved GPU parallelism utilization.
(2) Because our method detaches all the other blocks when training each block locally, we support asynchronous training for all the blocks. Suppose we have 3 local blocks, $f_1, f_2, f_3$, for a neural network, the forward propagation and local training of $f_1(x^{(t+2)})$, $f_2(x^{(t+1)})$, $f_3(x^{(t)})$ can be performed asynchronously, where $x^{(t)}$ denotes the batch of train data in iteration $t$. This strategy can also speedup training.

\section{Implementation Details}
\label{training_details}

\textbf{Training details}

We train all models following the common practices. 
To train ResNet models, we use the SGD optimizer with a learning rate of 0.1, a momentum of 0.9, and weight decay of 0.0001. We train these networks for 100 epochs with a batch size of 1024. The initial learning rate is set to 0.1 and decreases by a factor of 0.1 at epochs 30, 60, and 90. Data preprocessing includes random resizing, flipping, and cropping.
For ViT-S/16 training on ImageNet, we use a batch size of 4096. We use the AdamW optimizer with a learning rate of 0.0016, and we apply a cosine learning rate annealing schedule after a linear warm-up for the first 20 epochs. The training process lasts for 300 epochs, and we apply data augmentations like random resized cropping, horizontal flipping, RandAugment, and Random Erasing.
Training Swin Transformers on ImageNet uses a batch size of 1024. We use the AdamW optimizer with a learning rate of 0.001, along with betas (0.9, 0.999), epsilon 1e-08, and a weight decay of 0.05. Learning rate scheduling includes a linear warm-up for the first 20 epochs, followed by a cosine annealing schedule with a minimum learning rate of 1e-05. For Swin Transformer models, including ``tiny'' and ``small'' versions, we have a dropout rate of 0.2, an input image size of $224\times 224$, and a patch size of $16\times 16$. Training runs for 300 epochs, and we incorporate data augmentations like random resized cropping, horizontal flipping, RandAugment, and Random Erasing. The coefficient of our SGR loss $\lambda$ is set as 10k in these supervised learning experiments on ImageNet.

In our self-supervised experiments, we follow the Barlow Twins training procedure~\cite{zbontar2021barlow}. We use a ResNet-50 model and train it for 300 epochs. We optimize it using LARS with a learning rate of 1.6, a momentum of 0.9, and a weight decay of 1e-06. The batch size is 2048, and the learning rate schedule includes a linear warm-up for the first 10 epochs, followed by cosine annealing until the 300th epoch, maintaining a minimum learning rate of 0.0016. We use a three-layer MLP projector with 8192 hidden units and 8192 output units. To evaluate the transferability of feature representations on the ImageNet dataset through linear classification, we employ the SGD optimizer with a learning rate of 0.3, a momentum of 0.9, and a weight decay of 1e-06. We train the model using a cosine annealing learning rate schedule over 100 epochs, with a batch size of 256. Data augmentation techniques include random resizing and flipping. The model architecture is based on a ResNet-50 backbone with specific stages frozen, along with a linear classification head. 

On CIFAR, we train all models for 200 epochs with an initial learning of 0.1 and a cosine annealing learning rate scheduler. We use a batchsize of 128 and adopt the SGD optimizer with a momentum of 0.9 and a weight decay of 5e-4. Standard data pre-processing and augmentations are adopted. The coefficient of our SGR loss $\lambda$ is set as 1 as default.

\textbf{Local classifier setups}

For experiments on ImageNet, we divide a model into 2 or 3 local modules such that the training memory consumption reaches its minimum. We adopt the same local classifier following \cite{wang2021revisiting} for fair comparison. For experiments on CIFAR, when $K=16$, each residual block and the initial stem layer is a local module. When $K=3$, the model is divided according to the feature spatial resolution. When $K=2$, we keep the last stage (feature size of $8\times8$) as one local module, and all the other blocks as another local layer. We adopt the same local classifier following \cite{belilovsky2019greedy} ($k=2$ in their paper) for fair comparison. There are two layers in the classifier including one convolution layer and one linear layer for classification.

\begin{figure*}[t]
	\begin{subfigure}{0.33\textwidth}
		\centering
		\includegraphics[width=1\linewidth]{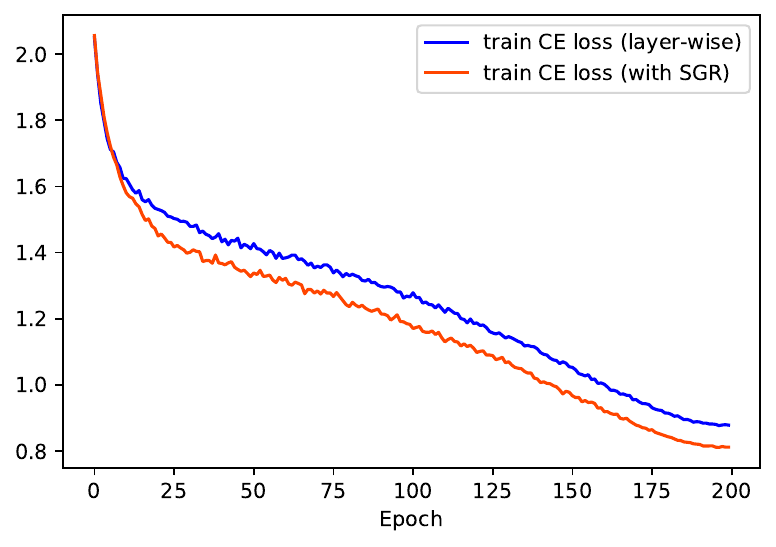}  
		\label{train_loss}
	\end{subfigure}
	\begin{subfigure}{0.33\textwidth}
		\centering
		\includegraphics[width=1\linewidth]{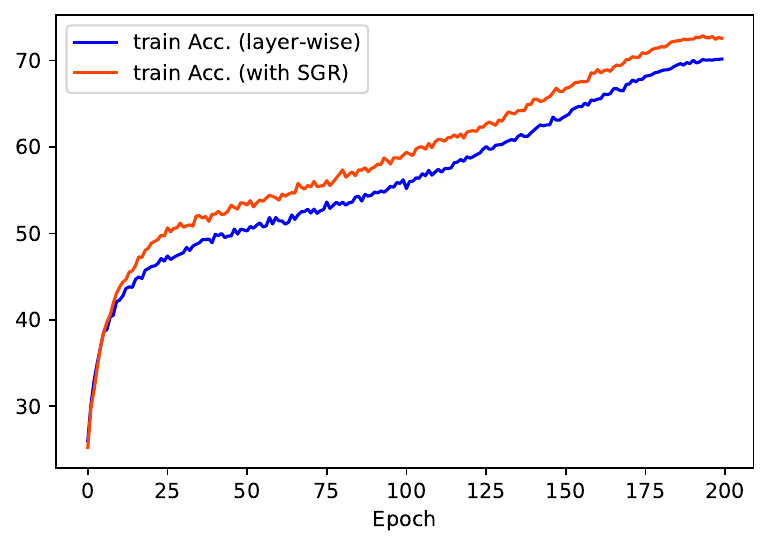}  
		\label{fig:train_acc}
	\end{subfigure}
	\begin{subfigure}{0.33\textwidth}
		\centering
		\includegraphics[width=1\linewidth]{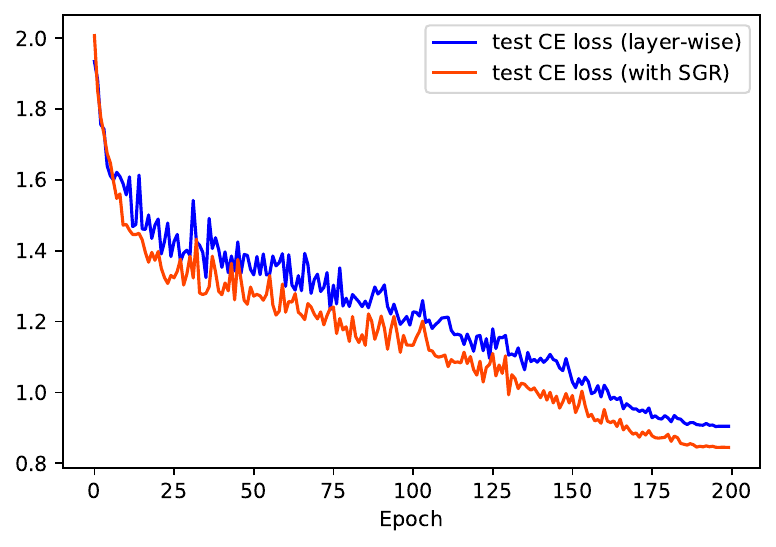}  
		\label{testloss}
	\end{subfigure}
	\vspace{-10mm}
	\caption{Train loss (left), train accuracy (middle), and test loss (right) curves with and without our method. The corresponding SGR loss value and test accuracy curves are shown in Figure \ref{curve}.}
	\label{traintestcurve}
\end{figure*}


\section{Others}

We provide more discussions and results, some of which are suggested by reviewers in the review process. 

\textbf{The limitations of our work include:} 1) The method helps local training to better approach to global BP training by mitigating the discordant local updates successively. But there is still a gap with global BP training because the regularization term could not be optimized to 0 in real implementation. 2) The analysis and method in this study mainly deal with local training from scratch. How to apply our method into finetuning a pretrained model locally, especially for large models, deserves future exploration. 3) The train time of our method is on par with InfoPro, but is still longer than global BP training due to the introduced regularization term.

\textbf{Further insights into how this regularization affects the convergence behavior and final performance.} Our method adds a regularization onto the classification loss. It seems that the regularization will impede the original optimality of the classification loss and deteriorate the final performance. This is true for one-layer network. After all the local training of one layer is just the global BP training of this network. However, our method is applied in the local training of multiple-layer networks. For the last layer that outputs the final representation, its classification loss is dependent on two variables, the parameters of the last layer, and the input of the last layer. In global BP training, the chain rule can pass the gradient of the last-layer loss \emph{w.r.t.} the input of the last layer into prior layers to change their parameters and produce a better input feature that decreases the last-layer loss. But in local training, gradients of all layers are isolated. We have no way to ensure that the input of the last layer, \emph{i.e.} the output of its previous layer optimized by the previous local error, most satisfies the demand of minimizing the last-layer loss. In this case, from a perspective of optimization, our method does not solely optimize the last-layer parameters with its classification loss, but moves the parameters to a direction, such that the change of its input feature caused by the previous local error also enables to minimize the last-layer classification loss. That is to say training with our regularization $\mathcal{L}^{SGR}$ simultaneously optimizes the parameter variable and the input variable, to induce a larger reduction of the classification loss, and a better model performance finally. The input variable is optimized in an implicit manner, by mitigating the discordant local updates from the top to the end of a deep network successively. As shown in Proposition \ref{proposition1}, when $\mathcal{L}^{SGR}=0$ for all layers, each local update will be equivalent to the true gradient directly from the last-layer classification loss, in which situation, the input change of each local layer most satisfies the demand to minimize the last-layer classification loss. 

\textbf{Discussion of the comparison to other memory-saving techniques.} Other memory-saving techniques, including gradient checkpointing and reversible architectures, are based on the trade-off between memory cost and computation cost. These methods still rely on global BP training, which means the activations of all layers are required when performing backpropagation of the last-layer error. In gradient checkpointing, during the forward propagation, the prior activations are removed once it is propagated into the next layer to save memory cost. In the backward propagation, however, the activations are recovered by performing forward propagation once again for each layer. Similarly, reversible architectures recover input activations of each layer in the backward propagation by re-performing these layers with their output activations using a dual path. Therefore, both gradient checkpointing and reversible architecture require to re-calculate each layer, which heavily increases their computational burden.
In contrast, our method belongs to local training. It naturally enjoys memory efficiency because the forward and backward propagations of each layer are performed locally. After the local update of one layer, the activations of this layer can be detached in this iteration without performing the operations inside this layer again. Therefore, we only need to consume the memory for one layer's update while performing the forward and backward propagations of each layer only once.

\textbf{More results.} A PyTorch-like pseudocode for the implementation of our SGR is shown in Algorithm \ref{alg:example}. The training curves, including train loss, train accuracy, and test loss, are shown in Figure \ref{traintestcurve}. The corresponding SGR loss value and test accuracy curves are shown in Figure \ref{curve}.

\begin{algorithm}[tb]
	\caption{A PyTorch-like pseudocode for local training with SGR}
	\label{alg:example}
	\begin{algorithmic}
		\STATE {\bfseries Given:} a model divided into a module list [$\gM_k$] parameterized by $\vtheta_{k}$ with local heads $\gL_k(\cdot, Y)$, $1\le k \le L$; train data $\gS \in \left\{ (\vx_0^t, Y^t) \right\}_{t\le T}$ of samples or mini-batches; hyper-parameter \verb|lambda|;
		\STATE \verb|optimizer_list = [torch.optim.SGD(|$\vtheta_k$\verb|) for k in range(1,L+1)]|
		\STATE \verb|SGR_criterion = nn.MSELoss()|
		\FOR{$(\vx_0^t, Y^t)\in\gS$}
		\STATE \verb|input| = $\vx_0^t$
		\FOR{$k=1$ {\bfseries to} $L$}
		\STATE \verb|optimizer = optimizer_list[k]|
		\STATE \verb|input.requires_grad_()| 
		\STATE \verb|output = |$\gM_k$\verb|(input)|
		\STATE \verb|loss_cls = |$\gL_{k}(\verb|output|, Y)$
		\IF{$k >= 2$}
		\STATE \verb|delta_now = torch.autograd.grad(|
		\STATE \quad \quad \quad \quad \quad \quad \verb|loss_cls, input, retain_graph=True, create_graph=True)[0]|
		\STATE \verb|delta_pre_norm = torch.flatten(delta_pre, 1)|
		\STATE \verb|delta_pre_norm = delta_pre_norm / torch.sqrt(|
		\STATE \quad \quad \quad \quad \quad \quad \verb|torch.sum(delta_pre_norm ** 2, dim=1, keepdims=True))|
		\STATE \verb|delta_now_norm = torch.flatten(delta_now, 1)|
		\STATE \verb|delta_now_norm = delta_now_norm / torch.sqrt(|
		\STATE \quad \quad \quad \quad \quad \quad \verb|torch.sum(delta_now_norm ** 2, dim=1, keepdims=True))|
		\STATE \verb|loss_SGR = SGR_criterion(delta_now_norm, delta_pre_norm)|
		\STATE \verb|loss = loss_cls + lambda * loss_SGR|
		\STATE \verb|optimizer.zero_grad()|
		\STATE \verb|loss.backward()|
		\STATE \verb|optimizer.step()|
		\ELSE
		\STATE \verb|delta_pre = torch.autograd.grad(|
		\STATE \quad \quad \quad \quad \quad \quad \verb|loss_cls, output, retrain_graph=True)[0].detach()| 
		\STATE \verb|optimizer.zero_grad()|
		\STATE \verb|loss_cls.backward()|
		\STATE \verb|optimizer.step()|
		\ENDIF
		\STATE \verb|input = output.detach()|
		\ENDFOR
		\ENDFOR
	\end{algorithmic}
	\label{algorithm}
\end{algorithm}

\end{document}